\newcommand\pen[1]{(#1){}^{+}} 
\renewcommand\P[2]{\mathcal{I}(#1,#2)} 
\newcommand{\SO}{O} 
\begin{document}

\title{Interpretable Global Minima of Deep ReLU Neural Networks on Sequentially Separable
Data}

\editor{Christoph Lampert}

\author{Thomas Chen \email tc@math.utexas.edu \\
\addr Department of Mathematics \\ University of Texas at Austin \\ Austin, TX
78712 USA
\AND
\name Patr\'{i}cia Mu\~{n}oz Ewald\thanks{Corresponding author.} \email ewald@utexas.edu \\ 
\addr Department of Mathematics \\ University of Texas at Austin \\
Austin, TX 78712 USA 
}

\definecolor{burntorange}{HTML}{BF5700}
\definecolor{UTblue}{HTML}{00A9B7}
\definecolor{bluebonnet}{HTML}{005F86}
\hypersetup{ hidelinks,
    pdfauthor={Patricia Munoz Ewald}, 
    pdftitle={Interpretable minima of deep neural networks},
}


\maketitle

\begin{abstract}%
    We explicitly construct zero loss neural network classifiers.  We write the weight
    matrices and bias vectors in terms of cumulative parameters, which determine
    truncation maps acting recursively on input space. 
    The configurations for the training data considered are $(i)$ sufficiently small, well
    separated clusters corresponding to each class, and $(ii)$ equivalence classes which
    are sequentially linearly separable. In the best case, for $Q$ classes of data in
    $\mathbb{R}^{M}$, global minimizers can be described with $Q(M+2)$ parameters.
\end{abstract}

\begin{keywords}
    deep learning, classification, loss landscape, interpolation, 
    interpretability, geometry
\end{keywords}

\section{Introduction}

Deep learning algorithms are often thought to involve a trade-off between accuracy and
interpretability.
Highly overparameterized neural networks are capable of interpolating
generic data, and gradient
descent offers a standard way of training the parameters to minimize some loss function.
However, the resulting models function as black boxes, providing limited insight into
their internal
mechanisms. In contrast, underparameterized networks cannot achieve zero training loss
unless the data possesses sufficient structure \citep{chenewald23gd,chenmoore25}.

In this work, continuing the program started by the authors
\citep{chenewald23deep,chenewald23shallow},
we provide an interpretation for the action of the layers of a neural
network, and explicitly construct global minima for a classification task with well
distributed data, in a manner that is independent of the number of training samples.
This characterization is done in terms of a map which encodes
the action of a layer given by a weight matrix $W$, a bias vector $b$, and activation
function $\sigma$, 
\begin{align*}
     \tau_{W,b}(x) = \pen{W}\left(\sigma (Wx + b) - b\right),
\end{align*}
where $\pen{W}$ denotes the generalized inverse of $W$.
A neural network with $L$ layers can be seen as an application of $L-1$ such
transformations $\tau$ on input space, which curate the data, followed by an affine
function in the last layer mapping to the output space,
\begin{align*}
       x^{(L)} &= W^{(L)} (x)^{(\tau, L-1)} + b^{(L)},
\end{align*}
where $x^{(\tau , L-1)}$ represents the $L-1$ consecutive applications of $\tau$ maps
to an input $x$, and $W^{(L)}, b^{(L)}$ are appropriate cumulative parameters.
While this works for any given activation function, we specialize to the case of a
ReLU neural network, and in this context call $\tau$ the \emph{truncation map}. 
An incomplete, but highly intuitive visualization of its action comes from looking at
cones:
For any cone in input space, there exist $W$ and $b$ such that $\tau_{W,b}$ acts as the
identity on the forward cone, and projects the entirety of the backward cone to the base
point (see Lemma \ref{lemma}).

\begin{figure}
    \label{fig:simplex}
    \centering
    \def\svgwidth{0.44\columnwidth}
    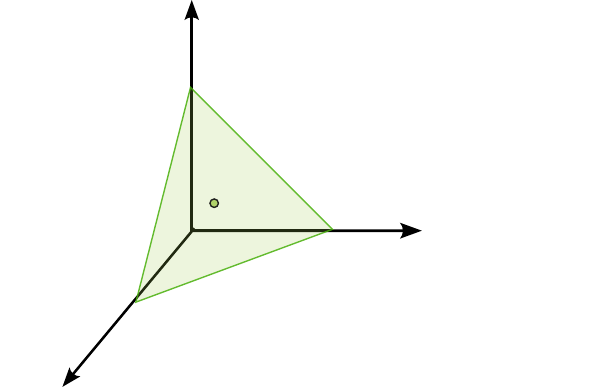
    \def\svgwidth{0.44\columnwidth}
    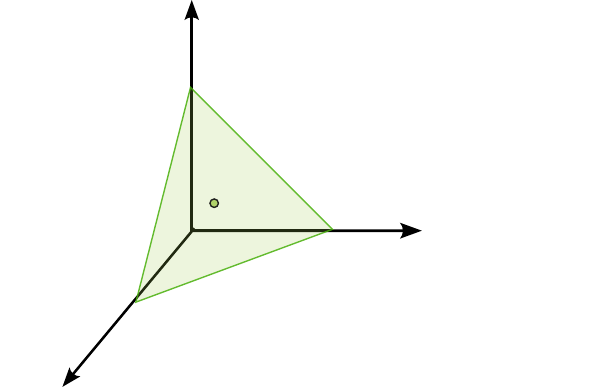
    \caption{A schematic representation of the action of a layer given by a truncation
    map. On the left, the original data with three classes contained in balls around the
    means $\overline{x_{0,j}}$, with $\mathcal{X}_{0,1}$
    sitting in the backward cone $\mathfrak{C}_{\theta}$, and the remaining classes
    sitting in the forward cone. On the right, the truncation map has pushed all of
    $\mathcal{X}_{0,1}$ to a single point, the base of the cone. In green, we see the
    2-simplex generated by the means, and the barycenter $\overline{x}$. The geometry of
the data corresponds to the assumptions of Proposition \ref{decreasingwidths}.}
\end{figure}

Consider data in $\mathbb{R}^{M}$ to be classified into $Q$ classes. 
In previous work \citep{chenewald23deep}, the tools outlined above were used to construct
explicit global minimizers for a network with constant width $M=Q$ and ``clustered'' data.
In this paper, we extend this
to the case where the width of the hidden layers is allowed to decrease monotonically from
$M$ to $Q$, for $Q\leq M$.
As a corollary, we show a reduction in the total number of parameters needed to
interpolate the data.
Our first result (informally stated) is

\begin{thm*}[Corollary to Proposition \ref{decreasingwidths}]
    Consider a set of training data $\mathcal{X}_{0} = \bigcup_{j=1}^{Q} \mathcal{X}_{0,j}
    \subset \mathbb{R}^{M}$ separated into $Q$ classes corresponding to linearly
    independent labels in $\mathbb{R}^{Q}$, for $Q \leq M$. If the data is distributed in
    sufficiently small and separated balls (``clustered''), then there exists a ReLU
    neural network with $Q(M+Q^{2})$ parameters in $Q+1$ layers which interpolates the
    data. 

    The weights and biases can be explicitly and geometrically described. Moreover, this
    does not depend on the size of the data set, $\abs{\mathcal{X}_{0}}$.
\end{thm*}

The proof can be easily sketched:
For each layer, a truncation map can be constructed which maps an entire cluster to a new
point, while keeping the remaining clusters the same (see Figure \ref{fig:simplex}). The
weights and biases can be given in terms of the variables defining the associated cones.

As the angular aperture of a cone gets larger, it becomes a hyperplane.
These are a well known classification tool, and there are algorithms dedicated to
finding hyperplanes maximizing the distance to the two sets being separated, such as
support vector machines (SVMs). Recent works show that under some circumstances, ReLU
networks trained with gradient descent to perform binary classification tend to find 
max-margin hyperplanes \citep[see e.g.,][]{phuonglampert20}.\footnote{The tendency of
    gradient-based algorithms to find critical points which generalize well or are simple
in some way, e.g. low-rank, is called implicit bias or implicit regularization.}

\begin{figure}
    \centering
    \def\svgwidth{0.85\columnwidth}
    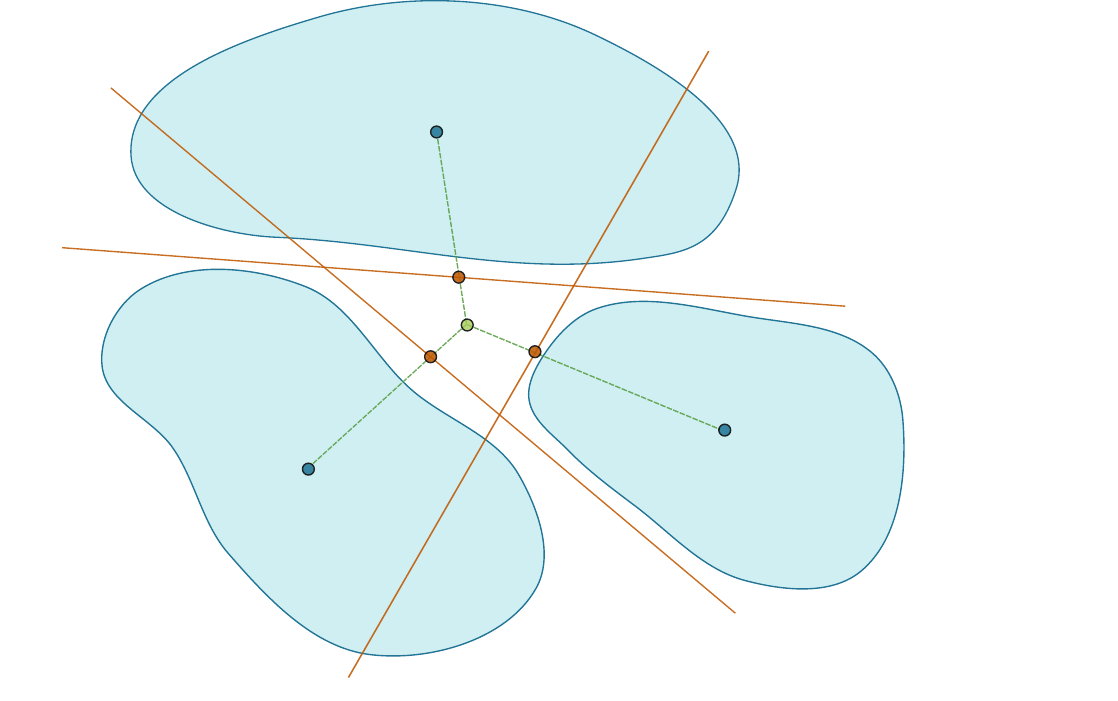
    \caption[a]{A 2-dimensional representation of a data set $\mathcal{X}_{0} =
        \bigcup_{j=1}^{3} \mathcal{X}_{0,j}$ that is sequentially linearly separable
        (Definition \ref{sls}), but
        not clustered
        (cf. Figure
        \ref{fig:simplex}). 
    Note that the ordering chosen is the only possible one. The geometry of
the data corresponds to the assumptions of Theorem \ref{mainthm}.}
    \label{fig:sls}
\end{figure}

The same algorithms can be adapted to the case of multiple classes, by reducing to binary
classification via several one-versus-all or one-versus-one strategies.  However, a
neural network tasked with performing multiclass classification must yield a single
function capable of classifying. When the data is clustered as above, Figure
\ref{fig:simplex} shows how the layers of the network will push each cluster into a
single point.  By considering hyperplanes as limits of cones, this
can be generalized to data for which at each step, one class gets mapped into a single
point. Thus, each layer acts to find a one-versus-all hyperplane. If this can be done so
that at the end there are $Q$ linearly independent points, the last layer will complete
the classification with zero loss. We define data which satisfies this condition as
\emph{sequentially linearly separable}, and this leads to our next result (informally
stated).

\begin{thm*}[Theorem \ref{mainthm}]
    Consider a set of training data $\mathcal{X}_{0} = \bigcup_{j=1}^{Q} \mathcal{X}_{0,j}
    \subset \mathbb{R}^{M}$ separated into $Q$ classes corresponding to linearly
    independent labels in $\mathbb{R}^{Q}$. If the data is sequentially
    linearly separable, then a ReLU neural network with $Q+1$ layers of size $d_0 = \cdots
    = d_{Q} = M \geq Q$, $d_{Q+1}=Q$, attains a degenerate global minimum with zero training
    cost, which can be parametrized by
    \begin{align*}
        \left\{(\theta_{\ell}, \nu_{\ell}, \mu_{\ell}) \right\}_{\ell =1}^{Q} \subset (0,
        \pi)  \times \mathbb{R}^{M} \times (0,1),
    \end{align*}
    corresponding to triples of angles, normal vectors and a line segment of base points
    (resp.) describing cones and hyperplanes. The parameters do not depend on the size of
    the data set, $\abs{\mathcal{X}_{0}}$.
\end{thm*}

In the work at hand, we focus on geometric structure in zero-loss neural networks,
and leave questions about algorithmic implementations to future work. We note that the
complexity of
algorithmic implementations will address the detection of the training data geometry,
instead of the determination of network parameters. Once the training data geometry is
known, the network parameters can be calculated non-algorithmically.

In the case $M=Q$, the inductive proof of the main result was adapted
to give $2^{Q}-1$ suboptimal local minima, associated to the binary choice for each
cluster to have the truncation map act trivially or collapse the data to
a point \citep{chenewald23deep}.  However, the proof of minimality relied on an exact
expression for the norm of local minima which holds in this case \citep[Theorem
3.2]{chenewald23shallow}. The
assumption $M=Q$ is dropped for our results here, so the local minima cannot be found in
the same way. Moreover, the sequential separability of the data in Theorem \ref{mainthm}
might depend on the ordering of the classes (cf. Figure \ref{fig:sls}), so that the
truncation maps cannot be chosen independently.

Finally, we note that while we do not consider layers with arbitrary width, increasing the
width would only add unnecessary parameters, as we assume the data to be suitably linearly
separable.  It is known that data which is not linearly separable can be classified by an
SVM with the use of kernel machines \citep{boseretal92} which map the data into a space
of higher dimension in such a way that, in this new space, a separating hyperplane can be
found. Moreover, neural networks approximate kernel machines in the infinite width limit
\citep{bahrietal18, jacotetal18}. With these considerations, it could be the case that
for data that is not linearly separable, the first layers of a neural network could be
seen as describing a map to a higher dimensional space where the conditions for sequential
linear separability are now satisfied, and the results above can be applied.
An implementation of this idea for binary classification is given by \citet{ewald25}.

\vspace{1em}

\subsection{Related Work} 
We give here a short and incomplete survey of related results.

Since the ReLU function acts as the identity when activated, it is instructive to compare
with the linear case.
The \textbf{loss landscape of linear neural networks} is very well understood. Its study
started with the classic work by \citet{baldihornik89}, which proved (under some assumptions
on the
architecture and data) that a shallow (one hidden layer) linear neural network with square
loss has only global minima and saddle points, and that every critical
point is a projection of the least-squares estimator onto a subspace spanned by the
eigenvectors of a relevant matrix determined by the data and label matrices.\footnote{That
    is, a critical point with $rank(W_2 W_1)=r$ satisfies $W_2 W_1 =
    U_{r}U_{r}^{T} YX^{+}$, where $U_{r}$ is formed by $r$ columns of $U$, the
    matrix of eigenvectors of $\Sigma = Y\pen{X}XY^{T}$. See next section for notation.} 
\citet{kawaguchi16} extended this to
deep linear networks, proving under milder conditions (both on data and architecture) that
there are no suboptimal local minima, and giving some properties of the Hessian of the
loss at saddle points. 
\citet{zhouliang18} provided complete characterizations of all critical points of a deep
linear network with square loss, arbitrary architecture and very mild assumptions on the
data. Recently, \citet{achouretal22} gave a complete second order analysis of the loss
landscape of deep linear networks, with the same assumptions as \citet{kawaguchi16}.

The \textbf{loss landscape of ReLU neural networks}, on the other hand, is not as well
understood.  Following assumptions adopted in work by \citet{choromanskaetal15}, 
\cite{kawaguchi16} treats the
activation functions as random variables, and what is computed is the loss
with expectation; under these assumptions, the loss reduces to the loss of a deep linear
network.  \citet{tian17} studied ReLU networks with no hidden layers, Gaussian distributed
input data and target function a neural network with fixed weights, and found that certain
types of critical points are not isolated.  \citet{zhouliang18} studied shallow ReLU
networks with very mild assumptions on the data, characterized all critical points of the
square loss for parameters in certain regions of parameter space (depending on the
activation pattern of the neurons, and not covering the whole space), and gave an example
of a network for which there is a suboptimal local minimum. While not studying critical
points of a loss function directly, \citet{grigsbyetal22, grigsbyetal23} studied functional
properties of ReLU neural networks, such as ``functional dimension''\footnote{A simplified
    definition of functional dimension is 
the rank of the Jacobian of the network with respect to its parameters.}, the level sets
of which are contained in level sets of any loss function. 

Finally, the results here show how a trained neural network could take 
within-class variability to zero.
This is one of the features of \textbf{neural collapse}, a phenomenon which has been
empirically shown by \citet{papyanhandonoho20, hanpapyandonoho22} to emerge when training a
neural network towards zero loss.

\subsection{Outline of Paper}
In Section \ref{sec:setting}, we clearly define the types of neural networks and the cost
function under consideration and fix notation. In Section \ref{sec:truncones}, we define
the truncation map, remark on its action in terms of cones in input space and cumulative
parameters, and prove key lemmas. In
Section \ref{sec:clusters}, we give precise statements for Proposition
\ref{decreasingwidths} and a corollary. In Section \ref{sec:hyp}, we define sequentially
linearly separable data, and present a precise statement and proof for Theorem \ref{mainthm}.
In Section \ref{sec:gen}, we briefly discuss generalization guarantees, and in Section
\ref{sec:nc}, we elaborate on the relationship between our results and neural collapse.
In Section \ref{sec:proof}, we prove Proposition \ref{decreasingwidths}.
Finally, we collect some basic results on geometry of cones in Appendix
\ref{appendix}.

\section{Setting and Notation} \label{sec:setting}

We will be dealing with neural networks of the form
\begin{align} \label{neuralnet}
    x^{(0)} &= x_{0} \in \mathbb{R}^{d_0} \,\, \text{ initial input,} \nonumber \\ 
    x^{(\ell )} &= \sigma(W_{\ell } x^{(\ell -1)} + b_{\ell }) \in \mathbb{R}^{d_{\ell}},
    \text{ for } \ell =1, \cdots, L-1, \\ 
    x^{(L)} &= W_{L}x^{(L-1)}+b_{L} \in \mathbb{R}^{d_{L}}, \nonumber
\end{align}
where $W_{\ell} \in \mathbb{R}^{d_{\ell }\times d_{\ell -1}}$ for $\ell =1, \cdots, L$
are weight matrices, $b_{\ell } \in \mathbb{R}^{d_{\ell }}$ for $\ell=1, \cdots, L$ are
bias vectors, and the activation function 
$\sigma$ is the ramp function which acts component-wise on vectors,
$\sigma(x)_{i}=(x_{i})_{+} = \max(x_{i},0)$.
When considering the action on a data matrix
$X_0$ with $N$ columns corresponding to $N$ data points, we write
\begin{align*}
    X^{(\ell)} = \sigma\left(W_{\ell} X^{(\ell-1)} + B_{\ell }\right),
\end{align*}
where
\begin{align*}
    B_{\ell } := b_{\ell} u_{N}^{T} \in \mathbb{R}^{d_{\ell}\times N}
\end{align*}
for 
\begin{align*}    
    u_{N} := (1, \cdots, 1)^{T} \in \mathbb{R}^{N}.
\end{align*}

Consider a set of training data $\mathcal{X}_0 = \bigcup_{i=1}^{Q} \mathcal{X}_{0,j}
\subset \mathbb{R}^{M}$, where points in each subset
$\mathcal{X}_{0,j}=\{x_{0,j,i}\}_{i=1}^{N_{j}}$ are associated to one of $Q$ linearly
independent labels, $y_{j} \in \mathbb{R}^{Q}$. Let $\sum_{j=1}^{Q} N_{j} =
N$, so that $\abs{\mathcal{X}_{0}}=N$. We will be interested in the average of training
inputs associated to each output $y_{j}$,
\begin{align*}
    \overline{x_{0,j}} := \frac{1}{N_{j}} \sum_{i=1}^{N_{j}} x_{0,j,i},
\end{align*}
and the differences
\begin{align*}
    \Delta x_{0,j,i} := x_{0,j,i} - \overline{x_{0,j}}.
\end{align*}

It will be convenient to arrange the data and related  information into matrices, and so we
make the following definitions,
\begin{alignat}{2} \label{X_0red}
    X_{0,j} &:= \left[x_{0,j,1} \cdots x_{0,j,i} \cdots x_{0,j,N_{j}}\right] &&\in
    \mathbb{R}^{M\times N_{j}}, \nonumber \\
    X_0 &:= \left[X_{0,1} \cdots X_{0,j} \cdots X_{0,Q}\right] &&\in \mathbb{R}^{M\times N},
    \nonumber \\
    \Delta X_{0,j} &:= \left[ \Delta x_{0,j,1} \cdots \Delta x_{0,j,i} \cdots \Delta
    x_{0,j,N_{j}}\right] &&\in \mathbb{R}^{M\times N_{j}}, \nonumber \\
        \Delta X_{0} &:= \left[ \Delta X_{0,1} \cdots \Delta X_{0,j} \cdots \Delta X_{0,Q}
        \right] &&\in \mathbb{R}^{M\times N}, \nonumber \\
        \overline{X_{0,j}} &:= X_{0,j} - \Delta X_{0,j} &&\in \mathbb{R}^{M\times N_{j}},
        \\
        \overline{X_0} &:= \left[ \overline{X_{0,1}} \cdots \overline{X_{0,j}} \cdots
        \overline{X_{0,Q}} \right] &&\in \mathbb{R}^{M\times N}, \nonumber \\
        \overline{X_{0}^{red}} &:= \left[\overline{x_{0,1}} \cdots \overline{x_{0,j}}
        \cdots \overline{x_{0,Q}} \right] &&\in \mathbb{R}^{M\times Q}, 
        \nonumber  \\ 
        Y &:= \left[y_1 \cdots y_{j} \cdots y_{Q}\right] &&\in
        \mathbb{R}^{Q\times Q}, \nonumber \\
        Y^{ext} &:= [ y_1 u_{N_1}^{T} \cdots y_{j} u_{N_{j}}^{T} \cdots
        y_{Q} u_{N_{Q}}^{T} ] &&\in \mathbb{R}^{Q\times N}. \nonumber 
\end{alignat}
Then $\mathcal{X}_{0,j} = \Col(X_{0,j})$, the set of columns of $X_{0,j}$, and similarly
$\mathcal{X}_{0} = \Col(X_0)$.

Throughout, we will assume that $\overline{X_0^{red}}$ and $Y$ have full rank $Q\leq M$,
i.e., that the averages $\overline{x_{0,j}}$ and output vectors $y_{j}$ are linearly
independent.  Since $\overline{X_0^{red}}$ is injective, we write its generalized inverse as
\begin{align*}
    \pen{\overline{X_0^{red}}} = (\overline{X_0^{red}}^{T}\overline{X_0^{red}})^{-1}
    \overline{X_0^{red}}^{T} \in \mathbb{R}^{Q \times M},
\end{align*}
and this is a left inverse of $\overline{X_0^{red}}$. 
In general, for any matrix $A \in \mathbb{R}^{m \times n}$, we let
$ \pen{A} \in \mathbb{R}^{n \times m}$
denote its generalized inverse, that is, the unique matrix satisfying
\begin{align}
    \label{inverse}
    A\pen{A}A &= A, \nonumber \\ 
    \pen{A}A\pen{A} &= \pen{A}, \\
    A\pen{A} \text{ and } \pen{A}A &\text{ are symmetric.} \nonumber
\end{align}
Recall that $\pen{A}A$ is an orthogonal projector to $(\ker(A))^{\perp}$, $A\pen{A}$ is an
orthogonal projector to $\ran(A)$, and $\pen{A}$ is a left (resp. right) inverse of $A$ if
it is injective (resp. surjective).

Finally, for an appropriate neural network defined as in \eqref{neuralnet} with $d_0=M$ and
$d_L=Q$, we consider the square loss, given by the $\mathcal{L}^{2}$ Schatten
(or Hilbert-Schmidt) norm, 
\begin{align*}
    \norm{A}_{\mathcal{L}^{2}} := \sqrt{\Tr(A A^{T})},
\end{align*}
so that the cost function is
\begin{align*}
    \mathcal{C}[X_0, Y^{ext}, (W_{i}, b_{i})_{i=1}^{L}] &:= \frac{1}{2} \norm{X^{(L)} -
    Y^{ext}}^{2}_{\mathcal{L}^{2}} \nonumber \\
                                                        &= \frac{1}{2} \sum_{j=1}^{Q}
                                                        \sum_{i=1}^{N_{j}}
                                                        \abs{x_{0,j,i}^{(L)} -
                                                    y_{j}}^{2},
\end{align*}
where $\abs{\, \cdot \,}$ denotes the usual euclidean norm. 
When the data set given by $X_0$ and $Y$ is clear by context, we will simply denote
\begin{align*} 
    \mathcal{C}[(W_{i}, b_{i})_{i=1}^{L}] := \mathcal{C}[X_0,
    Y^{ext}, (W_{i}, b_{i})_{i=1}^{L}].
\end{align*}

\section{Truncation Map, Cones and Cumulative Parameters} \label{sec:truncones}

We recall here the definition of the truncation map \citep{chenewald23deep}, making
the necessary modifications for the case where the width is not constant and the
weight matrices might not be square. 

\begin{defi} \label{def-tau}
   Given $W\in \mathbb{R}^{M_1 \times M_0}$ and $ b\in \mathbb{R}^{M_1}$ we define the
   truncation map
   \begin{align*}
        \tau_{W,b}:\mathbb{R}^{M_0} &\to \mathbb{R}^{M_0} \nonumber \\
        x &\mapsto \pen{W}\left(\sigma (Wx + b) - b\right),
   \end{align*}
   where $\pen{W}$ is the generalized inverse defined as in \eqref{inverse}.
\end{defi}
Similarly, we can define the action on a data matrix $X\in \mathbb{R}^{M_0\times N}$
\begin{align*}
    \tau_{W,b}(X) = \pen{W}\left(\sigma (WX + B) - B\right),
\end{align*}
where $B=b\,u_{N}^{T}$. 
This map encodes the action of a layer of a neural network and represents it in the
domain. 
It has a nice recursive property which makes it possible to extend this representation to
several layers.  The following is a small generalization of a previous result 
\citep{chenewald23deep} to the case where the dimension of the intermediate spaces given by
the hidden layers is allowed to decrease.

\begin{prop} \label{prop-tau}
   Assume $M = d_{0}\geq d_{1} \geq \cdots \geq d_{L} = Q$,
   $X^{(\ell)} \in \mathbb{R}^{d_{\ell }\times N}$
   corresponds to the output of a hidden layer of a neural network defined as in
   \eqref{neuralnet} on a data matrix $X_0 \in \mathbb{R}^{M\times N}$, and all of the
   associated weight matrices $W_{\ell} \in \mathbb{R}^{d_{\ell }\times d_{\ell
   -1}}$ are full rank.
   Then the truncation map defined satisfies
   \begin{align} \label{tau_1}
       X^{(\ell)} = W_{\ell}\, \tau_{W_{\ell},b_{\ell}}(X^{(\ell-1)}) + B_{\ell}.
   \end{align}
   Moreover, defining the cumulative parameters
   \begin{align}
       \label{W(ell)}
       W^{(\ell )} := W_{\ell } \cdots W_1 \in \mathbb{R}^{d_{\ell }\times d_{0}},  
       \quad \text{ for } \ell=1,\cdots,L,
   \end{align}
   and
   \begin{align}
       \label{b(ell)}
       b^{(\ell )} := 
       \begin{cases}
           W_{\ell}\cdots W_2 b_1 + W_{\ell }\cdots W_3 b_2 + \cdots + W_{\ell } b_{\ell
           -1} + b_{\ell },  &\text{ if } \ell \geq 2, \\
           b_1, &\text{ if } \ell =1,
       \end{cases}
   \end{align}
   we have
   \begin{align} 
       \label{taurecursive}
       X^{(\ell)} &= W^{(\ell)}(X_0)^{(\tau, \ell)} + B^{(\ell)}, 
   \end{align}
   for $\ell =1, \cdots, L-1$, and
   \begin{align}
       \label{taulast}
       X^{(L)} &= W^{(L)} (X_0)^{(\tau, L-1)} + B^{(L)},
   \end{align}
   where 
   \begin{align*}
       (X_0)^{(\tau,\ell )}
           &:= \tau_{W^{(\ell)},b^{(\ell)}}(\tau_{W^{(\ell-1)},b^{(\ell-1)}}(\cdots
           \tau_{W^{(2)}, b^{(2)}}(\tau_{W^{(1)},b^{(1)}}(X^{(0)}))\cdots) \nonumber \\ 
           &= \tau_{W^{(\ell)},b^{(\ell)}} (X_0)^{(\tau ,\ell -1)}.
   \end{align*}
\end{prop}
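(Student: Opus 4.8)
The plan is to establish the three displayed identities in order, with the single structural observation that drives everything being the following: since $d_{\ell}\leq d_{\ell-1}$ and each $W_{\ell}$ is full rank, every $W_{\ell}$ is \emph{surjective}, so that $W_{\ell}\pen{W_{\ell}}$, being the orthogonal projector onto $\ran(W_{\ell})=\mathbb{R}^{d_{\ell}}$, equals the identity $\Id_{d_{\ell}}$. The same reasoning applies to each cumulative matrix $W^{(\ell)}=W_{\ell}\cdots W_1$, which is a composition of surjections and hence surjective from $\mathbb{R}^{d_0}$ onto $\mathbb{R}^{d_{\ell}}$, giving $W^{(\ell)}\pen{W^{(\ell)}}=\Id_{d_{\ell}}$. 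This is exactly the fact that makes the truncation map a genuine left inverse of the affine-then-ReLU action, and it is the only place where the hypotheses enter.

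For \eqref{tau_1} I would simply unfold Definition~\ref{def-tau}:
\begin{align}
    W_{\ell}\,\tau_{W_{\ell},b_{\ell}}(X^{(\ell-1)}) + B_{\ell}
    = W_{\ell}\pen{W_{\ell}}\bigl(\sigma(W_{\ell}X^{(\ell-1)}+B_{\ell})-B_{\ell}\bigr)+B_{\ell},
\end{align}
and applying $W_{\ell}\pen{W_{\ell}}=\Id$ collapses this to $\sigma(W_{\ell}X^{(\ell-1)}+B_{\ell})=X^{(\ell)}$, which is precisely the defining recursion \eqref{neuralnet}. For \eqref{taurecursive} I would argue by induction on $\ell$. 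The base case $\ell=1$ is immediate from \eqref{tau_1} together with $W^{(1)}=W_1$, $b^{(1)}=b_1$, and $(X_0)^{(\tau,1)}=\tau_{W_1,b_1}(X_0)$. For the inductive step I would first record the two cumulative-parameter recursions, $W^{(\ell)}=W_{\ell}W^{(\ell-1)}$ (immediate) and $B^{(\ell)}=W_{\ell}B^{(\ell-1)}+B_{\ell}$ (a one-line rearrangement of \eqref{b(ell)}, after passing to the bias matrices via $B^{(\ell)}=b^{(\ell)}u_{N}^{T}$). Feeding the inductive hypothesis $X^{(\ell-1)}=W^{(\ell-1)}(X_0)^{(\tau,\ell-1)}+B^{(\ell-1)}$ into these yields the key intermediate identity
\begin{align}
    W^{(\ell)}(X_0)^{(\tau,\ell-1)}+B^{(\ell)} = W_{\ell}X^{(\ell-1)}+B_{\ell},
\end{align}
so that $\sigma\bigl(W^{(\ell)}(X_0)^{(\tau,\ell-1)}+B^{(\ell)}\bigr)=X^{(\ell)}$. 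Substituting this into the definition of $(X_0)^{(\tau,\ell)}=\tau_{W^{(\ell)},b^{(\ell)}}\bigl((X_0)^{(\tau,\ell-1)}\bigr)$ gives $(X_0)^{(\tau,\ell)}=\pen{W^{(\ell)}}(X^{(\ell)}-B^{(\ell)})$, and one more application of $W^{(\ell)}\pen{W^{(\ell)}}=\Id$ closes the induction: $W^{(\ell)}(X_0)^{(\tau,\ell)}+B^{(\ell)}=X^{(\ell)}$. Finally, \eqref{taulast} involves no truncation map, since the last layer is affine; substituting \eqref{taurecursive} at level $L-1$ into $X^{(L)}=W_{L}X^{(L-1)}+B_{L}$ and using $W^{(L)}=W_{L}W^{(L-1)}$, $B^{(L)}=W_{L}B^{(L-1)}+B_{L}$ gives the claim directly.

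The computations are routine; the one point requiring care—and the crux I would flag—is the repeated replacement of $W\pen{W}$ by the identity. Were the widths allowed to increase, $W\pen{W}$ would be a nontrivial projector and these identities would fail, so the monotone-width hypothesis is essential rather than cosmetic. I would also verify the bias recursion $B^{(\ell)}=W_{\ell}B^{(\ell-1)}+B_{\ell}$ carefully, since it is the one step where the telescoping structure of \eqref{b(ell)} must be matched against the layerwise update.
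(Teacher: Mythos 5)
Your proposal is correct and follows essentially the same route as the paper: both arguments hinge on the surjectivity of each $W_{\ell}$ and of the cumulative products $W^{(\ell)}$ (so that $W\pen{W}=\Id$), combined with the recursions $W^{(\ell)}=W_{\ell}W^{(\ell-1)}$ and $B^{(\ell)}=W_{\ell}B^{(\ell-1)}+B_{\ell}$. The only difference is presentational — the paper carries out the two-layer composition explicitly and then says ``iterate,'' while you package the iteration as a clean induction — and your closing remark about where the monotone-width hypothesis enters is accurate.
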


\begin{proof}
    Using the fact that $W_{\ell} \pen{W_{\ell}}=\Id_{d_{\ell}\times d_{\ell}}$ for all
    $\ell =1,\cdots, L$, we have
    \begin{align*}
        X^{(\ell )} &=  \sigma(W_{\ell } X^{(\ell -1)} +B_{\ell }) \nonumber \\
                    &= W_{\ell }\pen{W_{\ell }} \sigma(W_{\ell } X^{(\ell -1)} +B_{\ell })
                     \nonumber \\
                    &= W_{\ell }\pen{W_{\ell }}\left(\sigma(W_{\ell } X^{(\ell -1)}
                    +B_{\ell }) - B_{\ell } \right) + B_{\ell }  \\
                    &= W_{\ell }\,  \tau_{W_{\ell }, B_{\ell }}(X^{(\ell -1)}) + B_{\ell
                    }, \nonumber
    \end{align*}
    which proves \eqref{tau_1}. Going one step further,
    \begin{align*}
        X^{(\ell )} &=  \sigma(W_{\ell } X^{(\ell -1)} +B_{\ell })  \nonumber \\
                    &= \sigma(W_{\ell }( W_{\ell -1}\, \tau_{W_{\ell -1}, b_{\ell
                    -1}}(X^{(\ell -2)}) + B_{\ell-1} ) + B_{\ell })  \nonumber \\
                    &\stackrel{(*)}{=} W_{\ell } W_{\ell -1} \pen{W_{\ell} W_{\ell -1}}
                    \sigma(W_{\ell }( W_{\ell -1}\, \tau_{W_{\ell -1}, b_{\ell
                    -1}}(X^{(\ell -2)}) + B_{\ell-1} ) + B_{\ell })  \\
                    &= W_{\ell } W_{\ell -1} \pen{W_{\ell} W_{\ell -1}}
                    \sigma(W_{\ell }W_{\ell -1}\, \tau_{W_{\ell -1}, b_{\ell
                    -1}}(X^{(\ell -2)}) + W_{\ell }B_{\ell-1}  + B_{\ell })  \nonumber \\
                    &= W_{\ell }W_{\ell -1} \, \tau_{W_{\ell }W_{\ell -1}, W_{\ell }b_{\ell
                    -1}+b_{\ell }} (\tau_{W_{\ell -1},b_{\ell -1}}(X^{(\ell -2)})) +
                    W_{\ell }B_{\ell -1}+B_{\ell }, \nonumber
    \end{align*}
    where $(*)$ follows from the surjectivity of $W_{\ell}W_{\ell-1}$.
    An iteration of these arguments then yields \eqref{taurecursive}. Finally,
    \begin{align*}
        X^{(L)} &= W_{L} X^{(L-1)} + B_{L}  \nonumber \\
                &= W_{L} (W^{(L-1)} X^{(\tau, L-1)} + B^{(L-1)}) + B_{L}  \\
                &=  W^{(L)} X^{(\tau, L-1)} + B^{(L)}. \nonumber
    \end{align*}
\end{proof}

This lemma, and especially expression \eqref{taulast}, suggest that we think of a neural
network with $L$ layers as a concatenation of $L-1$ truncation maps acting on the data in
input space $\mathbb{R}^{d_0}$, followed by an affine map to the appropriate
output space $\mathbb{R}^{d_{L}}$. These truncation maps and the affine map are given
by cumulative parameters, so that from now on in this paper a layer will be defined
by prescribing  $(W^{(\ell)}$, $b^{(\ell )})$ instead of $(W_{\ell }, b_{\ell })$, for all
$\ell =1\cdots, L$.

We will need the following properties of the cumulative parameters in the case where $d_0
\geq \cdots \geq d_{L}$, which applies to the results in Sections \ref{sec:clusters} and
\ref{sec:hyp}.

\begin{lemma}
    \label{penrose-concat}
    Consider a collection
     $\left\{W_{\ell }\right\}_{\ell =1}^{L}$ of linear maps which can be composed into
     surjective cumulative matrices $\{W^{(\ell)}\}_{\ell =1}^{L}$, defined as in
     \eqref{W(ell)}.
     Define
     \begin{align}
         \label{P(l)}
         P^{(\ell)} &:= \pen{W^{(\ell)}}W^{(\ell )}\, \pen{W^{(\ell -1)}}W^{(\ell
         -1)}\cdots \pen{W^{(1)}} W^{(1)} \nonumber \\
                    &= \pen{W^{(\ell )}}W^{(\ell )} P^{(\ell -1)}.
     \end{align}
     Then
     \begin{align} \label{P(l)=PenWW}
         P^{(\ell )}=\pen{W^{(\ell )}}W^{(\ell )},
     \end{align}
     and
     \begin{align} \label{WP=W}
         W^{(\ell )}P^{(\ell -1)}=W^{(\ell )}.
     \end{align}
\end{lemma}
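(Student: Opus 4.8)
The plan is to prove both identities by recognizing that $\Pi_\ell := \pen{W^{(\ell)}}W^{(\ell)}$ is, by the remark following \eqref{inverse}, precisely the orthogonal projector onto $(\ker W^{(\ell)})^{\perp}$, and that these projectors form a nested family. The one structural fact I would establish first is the kernel inclusion: since $W^{(\ell)} = W_{\ell}W^{(\ell-1)}$, any vector killed by $W^{(\ell-1)}$ is killed by $W^{(\ell)}$, so $\ker W^{(\ell-1)} \subseteq \ker W^{(\ell)}$ and hence $(\ker W^{(\ell)})^{\perp} \subseteq (\ker W^{(\ell-1)})^{\perp}$, i.e. $\ran(\Pi_\ell) \subseteq \ran(\Pi_{\ell-1})$. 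For orthogonal projectors onto nested subspaces this gives the absorption identity $\Pi_\ell \Pi_{\ell-1} = \Pi_\ell$: indeed $\Pi_\ell$ annihilates $\ran(\Pi_{\ell-1})^{\perp} = \ker(\Pi_{\ell-1})$ because that space lies in $\ker(\Pi_\ell)$, so $\Pi_\ell(\Id - \Pi_{\ell-1}) = 0$. The symmetry of $\pen{W^{(\ell)}}W^{(\ell)}$ guaranteed by \eqref{inverse} is what makes this orthogonality argument available.

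With $\Pi_\ell\Pi_{\ell-1} = \Pi_\ell$ in hand, I would obtain \eqref{P(l)=PenWW} by a one-line induction on the recursion $P^{(\ell)} = \Pi_\ell P^{(\ell-1)}$ from \eqref{P(l)}. The base case $P^{(1)} = \Pi_1$ is the definition, and assuming $P^{(\ell-1)} = \Pi_{\ell-1}$ we get $P^{(\ell)} = \Pi_\ell\Pi_{\ell-1} = \Pi_\ell = \pen{W^{(\ell)}}W^{(\ell)}$, closing the induction.

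For \eqref{WP=W}, I would invoke \eqref{P(l)=PenWW} at level $\ell-1$ to write $W^{(\ell)}P^{(\ell-1)} = W^{(\ell)}\pen{W^{(\ell-1)}}W^{(\ell-1)}$, then factor $W^{(\ell)} = W_{\ell}W^{(\ell-1)}$ and apply the first Penrose relation $A\pen{A}A = A$ from \eqref{inverse} to the inner block, giving $W_\ell\bigl(W^{(\ell-1)}\pen{W^{(\ell-1)}}W^{(\ell-1)}\bigr) = W_\ell W^{(\ell-1)} = W^{(\ell)}$.

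The only genuinely substantive step is the projector-nesting identity $\Pi_\ell\Pi_{\ell-1} = \Pi_\ell$; everything else is routine induction and bookkeeping with \eqref{inverse}. I expect the main obstacle to be largely presentational: being careful that the projectors multiply in the order dictated by the recursion $P^{(\ell)} = \Pi_\ell P^{(\ell-1)}$, so that it is $\Pi_\ell\Pi_{\ell-1}$ (and not the reverse order) that must collapse, and making explicit use of the symmetry of $\pen{A}A$ to justify that $\Pi_\ell$ annihilates $\ker(\Pi_{\ell-1})$. The surjectivity hypotheses ensure the cumulative inverses behave as genuine one-sided inverses, but the argument itself rests only on the kernel inclusion and the defining relations \eqref{inverse}.
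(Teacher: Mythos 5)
Your proof is correct, but the key step is justified differently from the paper. For \eqref{P(l)=PenWW}, the paper's induction step is purely algebraic: it factors $W^{(\ell)}=W_{\ell}W^{(\ell-1)}$ inside $\pen{W^{(\ell)}}W^{(\ell)}\pen{W^{(\ell-1)}}W^{(\ell-1)}$ and collapses the middle using the surjectivity hypothesis in the form $W^{(\ell-1)}\pen{W^{(\ell-1)}}=\Id$. You instead derive the absorption identity $\pen{W^{(\ell)}}W^{(\ell)}\,\pen{W^{(\ell-1)}}W^{(\ell-1)}=\pen{W^{(\ell)}}W^{(\ell)}$ from the kernel inclusion $\ker W^{(\ell-1)}\subseteq\ker W^{(\ell)}$ and the fact that $\pen{A}A$ is the orthogonal projector onto $(\ker A)^{\perp}$. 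Your route is slightly more general --- it never uses surjectivity of the cumulative matrices for this identity, only that $W^{(\ell)}=W_{\ell}W^{(\ell-1)}$ --- whereas the paper's is a shorter direct computation given the standing hypotheses. For \eqref{WP=W} the two arguments are essentially the same fact seen two ways: the paper writes $W^{(\ell)}(\Id-P^{(\ell-1)})=W_{\ell}W^{(\ell-1)}Q^{(\ell-1)}=0$ with $Q^{(\ell-1)}$ the projector onto $\ker W^{(\ell-1)}$, while you expand $P^{(\ell-1)}=\pen{W^{(\ell-1)}}W^{(\ell-1)}$ and invoke the Penrose relation $A\pen{A}A=A$; both are valid and neither needs surjectivity at that point.
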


\begin{proof}
    Note that $P^{(1)} = \pen{W^{(1)}}W^{(1)}$ by definition,
    and assume $P^{(\ell -1)} = \pen{W^{(\ell -1)}}W^{(\ell -1)}$. 
    If $W^{(\ell -1)}$ is surjective, then $W^{(\ell -1)}\pen{W^{(\ell -1)}} = \Id$ and
    \begin{align*}
        P^{(\ell)} &=\pen{W^{(\ell )}} W^{(\ell )}P^{(\ell -1)} \nonumber \\
                   &=\pen{W^{(\ell )}} W^{(\ell )}\pen{W^{(\ell-1)}}W^{(\ell -1)}
                   \nonumber \\
                   &=\pen{W^{(\ell )}} W_{\ell } \left( W^{(\ell -1)}\pen{W^{(\ell-1)}}
                   \right) W^{(\ell -1)} \\
                   &=\pen{W^{(\ell )}} W_{\ell } W^{(\ell -1)} = \pen{W^{(\ell
                   )}} W^{(\ell)}, \nonumber
    \end{align*}
    which gives \eqref{P(l)=PenWW}.

    We have proved that $P^{(\ell )}$ is the projector onto the orthogonal complement of
    the kernel of $W^{(\ell )}$
    for all $\ell =1,\cdots, L$. Now, let $Q^{(\ell -1)}=\Id - P^{(\ell -1)}$ be the
    projector to the kernel of $W^{(\ell -1)}$. Then
    \begin{align*}
        W^{(\ell )} (\Id-P^{(\ell -1)}) &= W^{(\ell )}Q^{(\ell -1)} \nonumber \\
                                        &= W_{\ell }W^{(\ell -1)}  Q^{(\ell -1)} \\
                                        &= 0, \nonumber
    \end{align*}
    and we have proved \eqref{WP=W}.
\end{proof}

\vspace{1ex}
The next lemma is the key tool used to prove the main results in Sections
\ref{sec:clusters} and \ref{sec:hyp}, as it describes the action of the truncation map in
some particular cases.
The activation function $\sigma$ acts in such a way that it is the identity on the
positive sector
\begin{align*}
    \mathbb{R}_{+}^{n} := \left\{(x_1,\cdots, x_n)^{T} \in \mathbb{R}^{n}: x_{i} \geq 0,
    i=1, \cdots, n \right\},
\end{align*}
and the zero map on the negative sector $\mathbb{R}_{-}^{n}$ (similarly defined).
Figure \ref{fig:truncation} shows the action of the truncation map in two dimensions,
for $W\in \mathbb{R}^{2\times 2}$ and $b\in \mathbb{R}^{2}$. In this simple context, it is
clear that for an invertible weight matrix, the action of  $\tau_{W,b}$ is completely
determined by a cone (given by the blue dashed lines on the first picture). While this is not
always true, it is still fruitful to consider a similar picture in general.
\begin{figure}[h]
    \centering
    \def\svgwidth{0.85\columnwidth}
    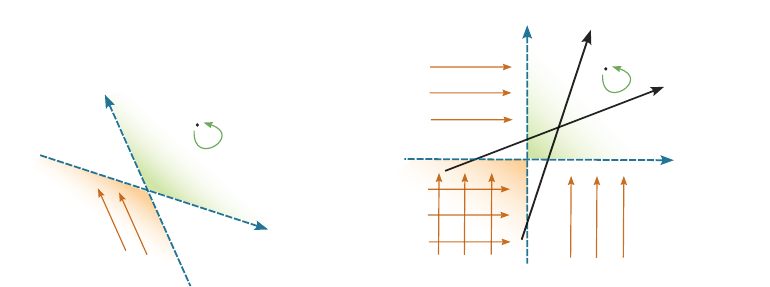
    \caption{The action of $\tau_{W,b}$ in two dimensions. The image on the left
    represents the input space given by coordinates $(x_1, x_2)$. On the right we see the
    image of this space under the map $x\mapsto \sigma(Wx + b)$, where $\sigma$ acts
    according to the new coordinate system $(y_1, y_2)$: it is the identity on the positive
    sector (shaded in green), and can be thought of as projecting along the orange arrows
    in the other regions, which gives the total effect of mapping the negative sector
    (shaded in orange) to the origin. Pulling this action back to input space produces the
    cone on the picture to the left.} 
    \label{fig:truncation}
\end{figure}

We define a cone of angle $\theta \leq \pi$ around a given vector $h\in \mathbb{R}^{n}$
and based at the origin,
\begin{align}
    \label{cone}
    \mathfrak{C}_{\theta}[h] := \left\{x \in \mathbb{R}^{n}: \angle (x, h) \leq
    \frac{\theta }{2}\right\}.
\end{align}
When the forward direction $h$ is clear, we will refer to a pair
$\mathfrak{C}_{\theta}[h]$, $\mathfrak{C}_{\theta}[-h]$ as the forward and backward cones,
respectively.
Then, for 
\begin{align}
    \label{thetan}
    \theta_{n} := 2 \arccos\frac{\sqrt{n-1}}{\sqrt{n}},
\end{align}
we note that $\mathfrak{C}_{\theta_{n}}[u_{n}] \subset \mathbb{R}^{n}$ is the largest
cone centered around the diagonal $u_{n} = (1, \cdots, 1)^{T}$ that is contained fully in
$\mathbb{R}_{+}^{n}$.\footnote{See appendix.} Similarly, the backward cone is fully
contained in $\mathbb{R}_{-}^{n}$. 
A cone $\mathfrak{C}_{\theta}[u_{n}]$ of larger angle can be made to fit into
$\mathbb{R}_{+}^{n}$ by a transformation which fixes the axis $u_n$ and shrinks
every other direction perpendicular to the axis according to 
    \begin{align}
        \label{lambda}
        \lambda(\theta,\theta_{n}) :=
        \begin{cases}
            \frac{\tan\frac{\theta_{n}}{2}}{\tan\frac{\theta}{2}} <1, &\text{ if }
            \theta > \theta_{n}, \\ 
            1, &\text{ otherwise.}
        \end{cases}
    \end{align}

Now, in the case where there is no change in dimension, Lemma \ref{lemma}
below states that for any given cone we can construct $W$ and $b$ so that the truncation
map $\tau_{W,b}$ acts as the identity on the forward cone, and projects the entirety of
the backward cone to its base point. However, if $\tau_{W,b}$ is encoding a layer in
a network for which there is a reduction in dimension, its image must also undergo a
dimensional reduction.  To account for the possible decrease in width of the network,
consider the family of projectors
\begin{align} \label{proj}
    \P nm&:\mathbb{R}^{n} \to \mathbb{R}^{m},\quad n\geq m, \nonumber\\
    \P nm(e^{n}_{i}) &=
    \begin{cases}
        e^{m}_{i}, \quad i = 1, \cdots, m, \\
        0, \quad \text{else,}
    \end{cases}
\end{align}
where $\left\{e_{i}^{n}\right\}_{i=1}^{n}$ is the standard basis for $\mathbb{R}^{n}$,
so that the matrix $\P nm \in \mathbb{R}^{m \times n}$ is a block matrix consisting of an
identity matrix $\Id_{m\times m}$ and zeroes.

\begin{lemma}
    \label{lemma}
    Consider the cone $p + \mathfrak{C}_{\theta}[h] \subset \mathbb{R}^{n}$ centered
    around a unit vector $h\in \mathbb{R}^{n}$ and based at $p \in \mathbb{R}^{n}$,
    and consider $W: \mathbb{R}^{n} \to \mathbb{R}^{m},\, b \in \mathbb{R}^{m}$ such that 
    \begin{align*}
        W &= \P nm W_{\theta} R, \nonumber \\
        b &= -Wp ,
    \end{align*}
    where
    \begin{align}
         \label{Wtheta}
        W_{\theta} = \tilde{R} \, \diag(1, \lambda(\theta,\theta_{n}), \cdots,
        \lambda(\theta,\theta_{n})) \, \tilde{R}^{T} \in \mathbb{R}^{n \times n},
    \end{align}
    with $\lambda(\theta,\theta_{n})$ defined as in \eqref{lambda}
    and $R, \tilde{R} \in \SO(n, \mathbb{R})$ are such that 
    \begin{align*}
        R h = \frac{u_{n}}{\abs{u_{n}}}, \,\, \tilde{R} e^{n}_1 =
    \frac{u_{n}}{\abs{u_{n}}}.  
    \end{align*}

    Then 
    \begin{align*}
       \tau_{W,b}(x) = 
       \begin{cases}
           (\pen{W}W)\, x, \quad  x \in p + \mathfrak{C}_{\theta}[h], \\
           (\pen{W}W)\, p, \quad  x \in p + \mathfrak{C}_{\theta}[-h].
       \end{cases}
   \end{align*}

    Moreover, if $P:\mathbb{R}^{n}\to \mathbb{R}^{n}$ is a map such that $WP =W$, then
    \begin{align}
        \label{tauP}
        \tau_{W,b}(Px) = \tau_{W,b}(x).
    \end{align}
\end{lemma}

\begin{proof}
    First, consider $x\in p + \mathfrak{C}_{\theta}[h]$. Since
    \begin{align*}
        \tau_{W,b}(x) = \pen{W} \left( \sigma( Wx + b) -b \right),
    \end{align*}
   it suffices to show that $Wx+b \in \mathbb{R}_{+}^{n}$, and so $\sigma(Wx + b) = Wx +
   b$. Substituting the given expressions for $W$ and $b$,
   \begin{align*}
       Wx + b &= W(x - p) = \P nm W_{\theta} R(x-p),
   \end{align*}
   then $x \in p + \mathfrak{C}_{\theta}[h] \text{ implies } (x - p) \in
   \mathfrak{C}_{\theta}[h]$, and we observe the action of the maps on the cones. First,
   an orthogonal transformation changes the axis, but not the opening angle, of a cone,
   and we have
   \begin{align*}
       R(x-p) \in \mathfrak{C}_{\theta}[Rh] = \mathfrak{C}_{\theta}[u_{n}].
   \end{align*}
   Then, the map
   $W_{\theta}$ is constructed so that it fixes the axis, but changes the aperture of the
   cone so that $W_{\theta} \mathfrak{C}_{\theta}[u_{n}] \subset
   \mathfrak{C}_{\theta_{n}}[u_{n}]$, which is contained entirely in $\mathbb{R}_{+}^{n}$,
   so
   \begin{align*}
       W_{\theta}R ( \mathfrak{C}_{\theta}[h]) \subset \mathfrak{C}_{\theta_{n}}[u_{n}] \subset
       \mathbb{R}_{+}^{n},
   \end{align*}
   and $W(x-p) \in \mathbb{R}_{+}^{m}$.

   For the backward cone, $x \in p + \mathfrak{C}_{\theta}[-h]$, similar arguments yield
   \begin{align*}
       W( \mathfrak{C}_{\theta}[-h]) \subset \mathfrak{C}_{\theta_{n}}[-u_{n}] \subset
       \mathbb{R}_{-}^{n},
   \end{align*}
   and then $W(x-p) \in \mathbb{R}_{-}^{m}$ implies $\sigma(Wx + b) = 0$ and
   \begin{align*}
       \tau_{W,b}(x) = \pen{W}(-b) = \pen{W}W p.
   \end{align*}

   Finally, \eqref{tauP} follows easily from the definition of the truncation map,
   \begin{align*}
       \tau_{W,b}(Px) &= \pen{W} \left(\sigma(W(Px) + b) - b \right) \nonumber \\
                      &=\pen{W} \left(\sigma(Wx + b) - b \right)\\  
                      &= \tau_{W,b}(x). \nonumber
   \end{align*}

\end{proof}

We can now use the truncation map and Lemma \ref{lemma} to generalize the construction of
zero loss global minima \citep{chenewald23deep} in two different directions.  First,
in Section \ref{sec:clusters}, we keep the training data as sufficiently separated, 
small enough clusters, but allow for $M = d_0 \geq d_1 \geq \cdots \geq d_L = Q$.
Then, in Section \ref{sec:hyp}, we hold the width of every layer but the last one constant,
$d_0 = d_1 = \cdots = d_{L-1}$, but allow for $M>Q$ and for a more general distribution of
the training data.

\begin{remark}
Note that the decomposition
\begin{align*}
    W = W_{\theta} R
\end{align*}
coincides with the polar decomposition of a matrix
\begin{align*}
    W = \abs{W} R
\end{align*}
for $\abs{W} := (W W^{T})^{\frac{1}{2}}$ and $R\in O(n, \mathbb{R})$. Indeed, 
\begin{align*}
    W W^{T} &= \tilde{R} \diag(1, \cdots) \tilde{R}^{T} R R^{T} \tilde{R} \diag(1, \cdots)
    \tilde{R}^{T} \nonumber \\
            &= \tilde{R}\diag(1,\cdots)^{2} \tilde{R}^{T} \\
            &= W_{\theta}^{2}.\nonumber 
\end{align*}
\end{remark}

\section{Clustered Data and Dimensional Reduction} \label{sec:clusters}

We will consider in this section the same geometry on the data as that adopted by
\citet{chenewald23deep}, which we now briefly recall. Let
\begin{align*}
    \overline{x} := \frac{1}{Q} \sum_{j=1}^{Q} \overline{x_{0,j}}
\end{align*}
be the average of all class means, and note that this is not necessarily the average of
all data points. 
We will assume that
\begin{align}
    \label{clustercond1}
    \delta := \sup_{i,j} \abs{\Delta x_{0,j,i}} 
    < c_0 \min_{j} \abs{\overline{x_{0,j}} - \overline{x}},
\end{align}
for $c_0 \in (0, \frac{1}{4})$.
Moreover, consider
\begin{align} 
    \label{fj}
    f_{j} := \frac{\overline{x} - \overline{x_{0,j}}}{\abs{\overline{x} -
    \overline{x_{0,j}}}}, \quad j=1, \cdots, Q,
\end{align}
the unit vectors pointing from $\overline{x_{0,j}}$ to $\overline{x}$. 
For each $j=1,\cdots, Q$, let $\theta_{*,j}$ be the smallest angle such that
\begin{align*}
    \bigcup_{j'\neq j} B_{4 \delta}( \overline{x_{0,j'}}) \subset \overline{x_{0,j}} +
    \mathfrak{C}_{\theta_{*,j}}[f_{j}],
\end{align*}
and assume 
\begin{align}
    \label{clustercond2}
    \max_{j} \theta_{*,j} < \pi.
\end{align}

\begin{prop}
    \label{decreasingwidths}
    Consider a set of training data $\mathcal{X}_{0} = \bigcup_{j=1}^{Q} \mathcal{X}_{0,j}
    \subset \mathbb{R}^{M}$ separated into $Q$ classes corresponding to linearly
    independent labels $\{y_{j}\}_{j=1}^{Q} \subset \mathbb{R}^{Q}$.
    If conditions \eqref{clustercond1} and \eqref{clustercond2} are satisfied, then a
    neural network with ReLU activation function defined as in \eqref{neuralnet}, with
    $L = Q+1$ layers, $d_0=M$, $d_{Q+1} = Q$, and $d_{\ell -1} \geq d_{\ell } \geq Q$ for all
    $\ell =1, \cdots, L$, attains 
    \begin{align*}
        \min_{(W_{i}, b_{i})_{i=1}^{L}} \mathcal{C}[(W_{i}, b_{i})_{i=1}^{L}] 
        = 0,
    \end{align*}
    with weights and biases given recursively by \eqref{W(ell)}, \eqref{b(ell)} and the
    following cumulative parameters: for $\ell =1, \cdots, Q$, 
    \begin{align*}
        W^{(\ell)} &= \P{d_0}{d_{\ell}} W_{\theta_{*}} R_{\ell}, \nonumber \\
        b^{(\ell)} &= -W^{(\ell)}( \overline{x_{0,\ell}} + \mu_{\ell } f_{\ell }),
    \end{align*}
    for $\P{d_0}{d_{\ell }} \in \mathbb{R}^{d_{\ell }\times d_0}$ a projection,
    $W_{\theta_{*}}\in GL(d_0, \mathbb{R})$ and $R_{\ell }\in \SO(d_0, \mathbb{R})$
    constructed as in Section \ref{sec:truncones},
    and some  $\mu_{\ell } \in (2 \delta, 3 \delta )$;
    for $\ell = Q+1$,
    \begin{align*}
        W^{(Q+1)} &= Y \pen{\overline{X_0^{red}}^{(\tau,Q)}}, \nonumber \\
        b^{(Q+1)} &= 0.
    \end{align*}
    This minimum is degenerate.
\end{prop}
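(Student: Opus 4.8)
The plan is to realize the $Q$ hidden layers as a composition of $Q$ truncation maps $\tau_{W^{(\ell)},b^{(\ell)}}$, each of which collapses exactly one cluster to a single point while leaving the others geometrically intact, and then to use the final affine layer to send the $Q$ surviving points to the prescribed labels. By Proposition \ref{prop-tau} (in particular \eqref{taulast}) it suffices to prescribe the cumulative parameters $(W^{(\ell)},b^{(\ell)})$ as in the statement and to verify that $(X_0)^{(\tau,Q)}$ has collapsed each class to one point, since $b^{(Q+1)}=0$ gives $X^{(L)} = W^{(Q+1)}(X_0)^{(\tau,Q)}$. First I would fix a common aperture $\theta_*\in(\max_j\theta_{*,j},\pi)$, which is admissible by \eqref{clustercond2}, and for each $\ell$ take the truncation map produced by Lemma \ref{lemma} for the cone based at $p_\ell := \overline{x_{0,\ell}} + \mu_\ell f_\ell$ with axis $f_\ell$ (cf. \eqref{fj}). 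The content of \eqref{clustercond1} and the choice $\mu_\ell\in(2\delta,3\delta)$ then enter through a purely geometric estimate: since every point of $\mathcal{X}_{0,\ell}$ lies within $\delta$ of $\overline{x_{0,\ell}}$ and $\mu_\ell>2\delta$, all of class $\ell$ sits in the backward cone $p_\ell+\mathfrak{C}_{\theta_*}[-f_\ell]$, whereas the $4\delta$-buffer in \eqref{clustercond2} guarantees that each ball $B_{4\delta}(\overline{x_{0,j'}})$, $j'\neq\ell$, remains inside the forward cone $p_\ell+\mathfrak{C}_{\theta_*}[f_\ell]$ even after the apex is shifted forward by $\mu_\ell f_\ell$.

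With the cone memberships of the original data in hand, Lemma \ref{lemma} shows that on class $\ell$ the map $\tau_{W^{(\ell)},b^{(\ell)}}$ collapses everything to the single point $\pen{W^{(\ell)}}W^{(\ell)}p_\ell$, while on every other class it acts as the orthogonal projection $P^{(\ell)}=\pen{W^{(\ell)}}W^{(\ell)}$ of \eqref{P(l)=PenWW}. The key to iterating is that each step is insensitive to the projections already performed: since $\ker W^{(\ell-1)}\subseteq\ker W^{(\ell)}$, the $P^{(\ell)}$ project onto a decreasing chain of subspaces, relation \eqref{WP=W} gives $W^{(\ell)}P^{(\ell-1)}=W^{(\ell)}$, and \eqref{tauP} then yields $\tau_{W^{(\ell)},b^{(\ell)}}(P^{(\ell-1)}x)=\tau_{W^{(\ell)},b^{(\ell)}}(x)$. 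Consequently, when evaluating the $\ell$-th truncation on the already-processed data I may replace each argument by its original (unprojected) representative, so that the cone conditions verified above continue to govern the action at every step. Running the induction, class $j$ is collapsed at step $j$ to $P^{(j)}p_j$ and is thereafter only projected by the nested $P^{(\ell)}$, $\ell>j$; using $P^{(\ell)}P^{(\ell-1)}=P^{(\ell)}$ this telescopes, so after all $Q$ steps class $j$ has been mapped to the single point $v_j := P^{(Q)}p_j$. In particular the within-class variance of $(X_0)^{(\tau,Q)}$ is zero, and its matrix of class representatives is exactly $\overline{X_0^{red}}^{(\tau,Q)} = [v_1\cdots v_Q]$.

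It remains to run the last layer. Using \eqref{clustercond1} and the linear independence of the $\overline{x_{0,j}}$, I would show the collapsed points $v_j$ are still linearly independent, so that $\overline{X_0^{red}}^{(\tau,Q)}$ is injective and $\pen{\overline{X_0^{red}}^{(\tau,Q)}}$ is a genuine left inverse. Then with $W^{(Q+1)}=Y\pen{\overline{X_0^{red}}^{(\tau,Q)}}$ we get $W^{(Q+1)}v_j = y_j$ for each $j$; since every column of $(X_0)^{(\tau,Q)}$ belonging to class $j$ equals $v_j$, the output $X^{(L)}$ coincides with $Y^{ext}$ column by column and the cost \eqref{cost} vanishes. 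Degeneracy of the minimum follows because the construction survives on open sets of parameters: the aperture $\theta_*$ ranges over $(\max_j\theta_{*,j},\pi)$ and each offset $\mu_\ell$ over $(2\delta,3\delta)$ without changing the (zero) value of the cost, so the minimizer is not isolated.

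The main obstacle is the bookkeeping of the second paragraph: one must be sure the truncation maps can be applied \emph{in sequence} and that the cone conditions, checked once for the raw data, remain valid after the earlier layers have already projected and collapsed part of the data. The mechanism that makes this work --- absorbing the accumulated projector $P^{(\ell-1)}$ via \eqref{WP=W} and \eqref{tauP}, together with the nesting $\ker W^{(\ell-1)}\subseteq\ker W^{(\ell)}$ --- is exactly what lets us avoid re-estimating geometry on the processed data. A secondary point requiring care is the compatibility of the prescribed cumulative weights with an honest decreasing-width factorization $W^{(\ell)}=W_\ell W^{(\ell-1)}$ into full-rank layers, as well as the geometric estimate relating the $4\delta$-buffer to the forward shift $\mu_\ell f_\ell$, both of which I would relegate to the appendix.
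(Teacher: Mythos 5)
Your overall architecture is the same as the paper's: one truncation map per class, cone membership checked via Lemma \ref{lemma}, the accumulated projector absorbed through \eqref{WP=W} and \eqref{tauP}, and the last layer solved with the generalized inverse. However, there are two genuine gaps. The first is in your treatment of the already-collapsed classes at step $\ell$. After absorbing $P^{(\ell-1)}$, the ``unprojected representative'' of class $j<\ell$ is not the original data ball but the single point $p_j=\overline{x_{0,j}}+\mu_j f_j$, which can sit as far as $3\delta$ from $\overline{x_{0,j}}$. You justify its membership in the $\ell$-th forward cone by asserting that each ball $B_{4\delta}(\overline{x_{0,j'}})$ survives the apex shift by $\mu_\ell f_\ell$; this is too strong and fails at the boundary, since $\theta_{*,j}$ is defined as the \emph{smallest} aperture containing the $4\delta$-balls, and shifting the apex forward strictly shrinks the cone. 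The correct buffer argument only yields that the $\delta$-balls survive a shift of up to $3\delta$ (this is the paper's footnoted claim), and that does not cover $p_j$ when $\mu_j>\delta$. The paper closes this with a separate claim, \eqref{j<lclaim}: $p_j$ lies on the segment from $\overline{x_{0,j}}$ to the barycenter $\overline{x}$, both endpoints lie in the shifted forward cone (using $\mu_\ell<3\delta<\lvert\overline{x_{0,\ell}}-\overline{x}\rvert$ from \eqref{clustercond1}), and the cone is convex. Some argument of this kind is needed; your version as written does not deliver it.

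The second gap is the final rank check. You state that linear independence of the collapsed points $v_j=P^{(Q)}p_j$ follows from \eqref{clustercond1} and the linear independence of the means, but $P^{(Q)}$ is an orthogonal projector of rank $d_Q$, which may be strictly less than $M$, and a projector can annihilate directions inside $\mathrm{span}\{p_j\}$; linear independence of $\{p_j\}$ alone gives nothing. The paper handles this by working in barycentric coordinates \eqref{bary} and exploiting the residual freedom in the rotations $\tilde R$ and $R_Q$ entering \eqref{Wtheta}: one chooses them (via a row permutation fixing $u_{d_0}$) so that the relevant $Q\times Q$ block of $W_{\theta_*}R_Q$ is invertible, which makes $P^{(Q)}$ injective on $\mathrm{span}\{\overline{x_{0,j}}\}_{j=1}^{Q}$. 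This is a necessary step in the construction, not a routine verification, and should not be deferred as self-evident.
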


Since the only constraints on the architecture are that the widths of the layers must
be non-increasing and bounded below by $Q$, one could construct a network for which the
first hidden layer immediately reduces to $Q$ dimensions, and the following layers have
constant width. This leads to a reduction in the total number of parameters needed, and we
have proved

\begin{coro}
    For training data $\mathcal{X}_{0} = \bigcup_{j=1}^{Q} \mathcal{X}_{0,j} \subset 
    \mathbb{R}^{M}$ satisfying the same conditions as above, there exists a neural
    network with $Q(M + Q^{2})$ parameters interpolating the data, and this does not
    depend on $\abs{\mathcal{X}_{0}} = N$.
\end{coro}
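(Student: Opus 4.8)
The plan is to instantiate Proposition \ref{decreasingwidths} at the most aggressive admissible choice of layer widths and then count the entries of the resulting weight matrices. Concretely, I would take the architecture
\begin{align}
    d_0 = M, \qquad d_1 = d_2 = \cdots = d_{Q+1} = Q,
\end{align}
which has $L = Q+1$ layers and satisfies the hypotheses of the proposition: the widths are non-increasing, since $M = d_0 \geq d_1 = Q$ and $d_{\ell-1} = d_\ell = Q$ for $\ell = 2, \ldots, Q+1$, and they are bounded below by $Q$. As conditions \eqref{clustercond1} and \eqref{clustercond2} are assumed to hold, Proposition \ref{decreasingwidths} supplies cumulative parameters $(W^{(\ell)}, b^{(\ell)})$, and hence via \eqref{W(ell)} and \eqref{b(ell)} the actual weights $(W_\ell, b_\ell)$, for which $\mathcal{C}[(W_i,b_i)_{i=1}^{L}] = 0$; that is, the network interpolates the data.

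It then remains to count parameters. With the widths above, the first weight matrix is $W_1 \in \mathbb{R}^{Q \times M}$, contributing $QM$ entries, while each of the remaining matrices $W_2, \ldots, W_{Q+1} \in \mathbb{R}^{Q\times Q}$ contributes $Q^2$ entries; there are $Q$ of the latter, so the total number of weight entries is
\begin{align}
    QM + Q\cdot Q^2 = Q(M + Q^2).
\end{align}
This is the claimed count. The biases add no further essential freedom: by the proposition $b^{(Q+1)} = 0$ and each $b^{(\ell)}$ is fixed through $b^{(\ell)} = -W^{(\ell)}(\overline{x_{0,\ell}} + \mu_\ell f_\ell)$ by the weights together with a single scalar $\mu_\ell$ and data-dependent quantities; I would state this counting convention (weights, with biases determined) explicitly so that the figure $Q(M+Q^2)$ is unambiguous.

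For the independence from $\abs{\mathcal{X}_0} = N$, I would point out that every ingredient of the cumulative parameters produced by the proposition --- the cluster means $\overline{x_{0,\ell}}$, the barycenter $\overline{x}$, the directions $f_\ell$, the admissible angles $\theta_{*,\ell}$, the rotations $R_\ell \in \SO(d_0,\mathbb{R})$, the projections $\P{d_0}{d_\ell}$, and the scalars $\mu_\ell$ --- depends only on $Q$ and on the cluster geometry, not on the cardinalities $N_j$ or their sum $N$. Since the weights and biases are recovered from these via \eqref{W(ell)} and \eqref{b(ell)}, the network itself is independent of $N$.

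The main point to handle with care --- rather than a genuine obstacle --- is that this corollary invokes the boundary case of the proposition in which the reduction to dimension $Q$ occurs already in the first layer. I would therefore double-check that Proposition \ref{decreasingwidths} truly covers the equality $d_{\ell-1} = d_\ell$ (constant width $Q$ after the first layer), and that the $Q$ truncation maps still collapse all $Q$ clusters to linearly independent points even though all but the first now act in the reduced space $\mathbb{R}^Q$. Both are guaranteed by the proposition as stated, whose only architectural constraints are $d_{\ell-1} \geq d_\ell \geq Q$, so the corollary follows immediately once the minimal architecture is fixed and its parameters are tallied.
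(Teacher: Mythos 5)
Your proposal is correct and matches the paper's own argument: the paper likewise observes that the architectural constraints of Proposition \ref{decreasingwidths} permit $d_1=\cdots=d_Q=Q$, so that $W_1\in\mathbb{R}^{Q\times M}$ contributes $QM$ entries and the remaining $Q$ matrices contribute $Q^{3}$, giving $Q(M+Q^{2})$ independently of $N$. Your explicit remarks on the bias-counting convention and on why every ingredient depends only on the cluster geometry are slightly more careful than the paper's one-paragraph justification, but the route is the same.
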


Thus, well-distributed data makes it possible to attain zero loss even in the
under-parametrized setting.

We defer the full proof to Section \ref{sec:proof}.

\section{Hyperplanes and Sequentially Separable Data} \label{sec:hyp}

In the previous section, we assumed that the data had to be neatly contained in
sufficiently small
and well spaced balls. However, when using Lemma \ref{lemma} in the proof, it was
sufficient to check that each ball lay inside a certain backward or forward cone. More
generally, for
the action of any given layer described by a truncation map, all that matters is that
there exists a cone such that the class that is being singled out sits entirely within the
backward cone, and the remaining data sits in the forward cone. 

From this point of view, the action of each iteration of the truncation map can be thought
of as performing a one-versus-all classification: a particular class is being singled out
and separated from the remaining classes. However, what we obtain is different from simply
breaking down a multiclass classification problem into several binary classification
tasks. Data that has been truncated by a layer will remain truncated, and so the order in
which the one-versus-all tasks are done is important.
To account for this, we extend the notion of linear separability from binary
classification to a problem with multiple classes as follows.

\begin{defi}
    \label{sls}
    We say a data set is \emph{sequentially linearly separable} if there exists an
    ordering of the classes such that for each $j = 1, \cdots, Q$, there exists 
    a hyperplane $H_{j}$ and a point $p_{j} \in H_{j}$ such that $H_{j}$
    separates $\mathcal{X}_{0,j}$
    and $\bigcup_{k<j} \{p_{k}\} \cup
    \bigcup_{k>j} \mathcal{X}_{0,k}$.\footnote{See Figure \ref{fig:sls} in the
    introduction for an example.}
\end{defi}

Let $H(p, \nu)$ be the hyperplane defined by a point $p$ and a normal vector $\nu$,
\begin{align*}
    H(p,\nu) := \left\{p+x \in \mathbb{R}^{n}: \langle x,\nu\rangle = 0 \right\}.
\end{align*}
For data that is sequentially linearly separable, we have 
the corresponding hyperplanes $H_{j} = H(p_{j}, h_{j})$ for some unit vectors $h_{j}$,
$j=1, \cdots, Q$. We will always consider data sets to be finite, and so the distance from
any such hyperplane to the each of the two sets it is separating is always positive.
Moreover, a pair $(p,\nu) \subset \mathbb{R}^{n}\times \mathbb{R}^{n}$, along with an
angle $\theta <\pi$, define a pair of cones $p + \mathfrak{C}_{\theta}[\nu]$ and $p+
\mathfrak{C}_{\theta}[-\nu]$ as in \eqref{cone}, and so to each hyperplane $H(p,\nu)$ is
associated a family of cones.

\begin{remark}
    \label{rmk-li}
    Similarly, we note that there exists $\epsilon >0 $ such that each point
    $p_{j}$ can be freely perturbed on an $\epsilon$-ball around it (possibly moving the
    corresponding hyperplane $H_{j}$ as well) so that $(\tilde{p}_{j},
    \tilde{H}_{j})_{j=1}^{Q}$ still satisfies Definition \ref{sls}, for $\tilde{p}_{j} \in
    (p_{j} + B_{\epsilon}(p_{j})) \cap \tilde{H}_{j}$, $j=1, \cdots, Q$.
    Therefore, we may assume that $\{p_{j}\}_{j=1}^{Q}$ form a linearly independent set.
\end{remark}

\begin{thm} 
    \label{mainthm}
    Consider a set of training data $\mathcal{X}_{0} = \bigcup_{j=1}^{Q} \mathcal{X}_{0,j}
    \subset \mathbb{R}^{M}$ separated into $Q$ classes corresponding to linearly
    independent
    labels $\{y_{j}\}_{j=1}^{Q} \subset \mathbb{R}^{Q}$.
    If the data is sequentially linearly separable, then a
    neural network with ReLU activation function defined as in \eqref{neuralnet}, with
    $L = Q+1$ layers, $d_0=M$, $d_{Q+1} = Q$, and $d_0 = d_{\ell } \geq Q$ for all hidden
    layers $\ell =1, \cdots, Q$, attains 
    \begin{align*}
        \min_{(W_{i}, b_{i})_{i=1}^{L}} \mathcal{C}[(W_{i}, b_{i})_{i=1}^{L}] 
        = 0,
    \end{align*}
    with weights and biases given recursively by \eqref{W(ell)}, \eqref{b(ell)} for the
    following cumulative parameters: for $\ell = 1, \cdots, Q$, 
    \begin{align*}
        W^{(\ell)} &=  W_{\theta_{\ell}} R_{\ell}, \nonumber \\
        b^{(\ell)} &= -W^{(\ell)}p_{\ell},
    \end{align*}
    for $W_{\theta_{\ell}}\in GL(d_0, \mathbb{R})$ and
    $R_{\ell }\in \SO(d_0, \mathbb{R})$ constructed as in Section \ref{sec:truncones}; and
    for $\ell = Q+1$,
    \begin{align*}
        W^{(Q+1)} &= Y \pen{\overline{X_0^{red}}^{(\tau,Q)}}, \nonumber \\
        b^{(Q+1)} &= 0.
    \end{align*}
    This minimum is degenerate.
\end{thm}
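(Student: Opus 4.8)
The plan is to follow the same inductive scheme as in the proof of proposition \ref{decreasingwidths}, replacing the cluster-based cones with the cones associated to the separating hyperplanes $H_j = H(p_j, h_j)$ guaranteed by sequential linear separability. Since every hidden layer now has constant width $d_0 = d_\ell = M$, the cumulative matrices $W^{(\ell)} = W_{\theta_\ell} R_\ell$ are invertible, so $\pen{W^{(\ell)}} W^{(\ell)} = \Id$ and hence $P^{(\ell)} = \Id$ for all $\ell$. This simplifies matters considerably: the projectors that complicated the previous argument disappear, and lemma \ref{lemma} reduces to saying that $\tau_{W^{(\ell)}, b^{(\ell)}}$ acts as the identity on the forward cone $p_\ell + \mathfrak{C}_{\theta_\ell}[h_\ell]$ and collapses the backward cone $p_\ell + \mathfrak{C}_{\theta_\ell}[-h_\ell]$ to the single point $p_\ell$.

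First I would set up the induction on $\ell = 1, \dots, Q$, following the chosen ordering of the classes. The inductive hypothesis is that after $\ell-1$ truncations, each already-processed class $j < \ell$ has been collapsed to the point $p_j$, while each unprocessed class $j \geq \ell$ is unchanged, i.e.
\begin{align}
    X_{0,j}^{(\tau, \ell-1)} =
    \begin{cases}
        p_j\, u_{N_j}^T, & j < \ell, \\
        X_{0,j}, & j \geq \ell.
    \end{cases}
\end{align}
For the inductive step, I would use the hyperplane $H_\ell = H(p_\ell, h_\ell)$: by sequential linear separability it separates $\mathcal{X}_{0,\ell}$ from $\bigcup_{i < \ell}\{p_i\} \cup \bigcup_{i > \ell}\mathcal{X}_{0,i}$. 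Because the data set is finite, the distance from $H_\ell$ to each of the two separated sets is strictly positive, so there is an angle $\theta_\ell < \pi$ and a choice of cone based at $p_\ell$ with axis $h_\ell$ such that $\mathcal{X}_{0,\ell}$ lies in the backward cone and everything else lies in the forward cone. Defining $W^{(\ell)} = W_{\theta_\ell} R_\ell$ and $b^{(\ell)} = -W^{(\ell)} p_\ell$ with $R_\ell h_\ell = u_{d_0}/\abs{u_{d_0}}$ as in section \ref{sec:truncones}, lemma \ref{lemma} then gives the identity action on the forward cone and collapse to $p_\ell$ on the backward cone, establishing the hypothesis for $\ell$. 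After $Q$ steps every class $\mathcal{X}_{0,j}$ has been mapped to the single point $p_j$, so $\Delta X_{0,j}^{(\tau,Q)} = 0$ and $\overline{X_0^{red}}^{(\tau,Q)} = [p_1 \cdots p_Q]$. The last affine layer then solves $W^{(Q+1)} \overline{X_0^{(\tau,Q)}} = Y^{ext}$ exactly via $W^{(Q+1)} = Y \pen{\overline{X_0^{red}}^{(\tau,Q)}}$, provided the $p_j$ are linearly independent; degeneracy follows from the freedom in choosing each $\theta_\ell$ (and the base point $p_\ell$ along its segment).

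The main obstacle I expect is verifying that the collapsed points $\{p_j\}_{j=1}^Q$ are linearly independent, so that $\overline{X_0^{red}}^{(\tau,Q)}$ is full rank and the final generalized inverse genuinely solves the system. Unlike the clustered case, here the cones are replaced by half-spaces (hyperplanes as limits of cones), and the base points $p_j$ lie on the segments from $\overline{x_{0,j}}$ to $\overline{x}$; I would exploit the definition $p_j \in \{\overline{x} + (1-t)\overline{x_{0,j}}\}$ together with the standing assumption that $\{\overline{x_{0,j}}\}_{j=1}^Q$ is linearly independent to argue that the $p_j$ form a nondegenerate $Q$-simplex spanning the same $Q$-dimensional subspace, exactly as in the previous proof. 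A secondary subtlety is confirming that collapsing class $j < \ell$ to its point $p_j$ keeps that point inside the forward cone of every later truncation; this is precisely why the definition of sequential linear separability requires $H_\ell$ to separate $\mathcal{X}_{0,\ell}$ from the \emph{collapsed} points $\bigcup_{i<\ell}\{p_i\}$ rather than from the original classes, so the inductive hypothesis is self-consistent and no point escapes the forward cone at a later stage.
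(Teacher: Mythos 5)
Your proposal is correct and follows essentially the same route as the paper's proof: the same induction over the ordered classes using the cones associated to the separating hyperplanes, the same use of lemma \ref{lemma} (simplified by $\pen{W^{(\ell)}}W^{(\ell)}=\Id$ in constant width), the same final least-squares layer, and the same source of degeneracy. The one point you flag as an obstacle, the linear independence of $\{p_j\}_{j=1}^{Q}$, is simply asserted in the paper, and your sketched argument (each $p_j$ lies strictly between $\overline{x_{0,j}}$ and the barycenter, so the $p_j$ span the same $Q$-dimensional subspace as the linearly independent means) is the right way to justify it.
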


\begin{proof}
    For $j=1, \cdots, Q$, let
    \begin{align}
        \label{pjhj}
        H_{j} = H(p_{j}, h_{j})
    \end{align}
    be a sequence of hyperplanes that realizes the sequential linear separability of the
    given data, and assume the ordering is correct as given.
    As $\mathcal{X}_{0}$ is a finite set, for each $j$ there exists $\theta_{j,min} < \pi$
    such that any $\theta_{j} \in (\theta_{j,min}, \pi)$ implies 
    \begin{align}
        \label{thetaj}
        p_{j} + \mathfrak{C}_{\theta_{j}}[h_{j}] &\supset  \nonumber
        \bigcup_{k<j} \{p_{k}\} \cup \bigcup_{k>j} \mathcal{X}_{0,k} \\  
        \text{ and } \\
        p_{j} + \mathfrak{C}_{\theta_{j}}[- h_{j}] &\supset \mathcal{X}_{0,j}. \nonumber
    \end{align}
    We will prove that $X_{0,j}^{(\tau, Q)} = p_{j} u_{N_{j}}^{T}$, or that for all
    $x\in \mathcal{X}_{0,j}$, $x^{(\tau , Q)} = p_{j}$, for  $j=1,\cdots, Q.$ 
    
    \vspace{1ex}
    \textbf{Induction base \texorpdfstring{$\ell =1$}{l=1}.} 
    Given $p_1, h_1,$ and $\theta_{1} \in (\theta_{1,min}, \pi)$, we have
    \begin{align}
        \label{cones1}
        \mathcal{X}_{0,1} &\subset p_1 + \mathfrak{C}_{\theta_{1}}[-h_1], \nonumber \\ 
        \bigcup_{j>1} \mathcal{X}_{0,j} &\subset p_1 + \mathfrak{C}_{\theta_{1}}[h_1].
    \end{align}
    Let
    \begin{align*}
        W_1 = W^{(1)} := W_{\theta_{1}} R_1 \in \mathbb{R}^{d_0\times d_0},
    \end{align*}
    for $W_{\theta_{1}}$ defined as in \eqref{Wtheta}, and $R_1\in \SO(d_0,\mathbb{R})$
    such that 
    \begin{align*}
        R_1 h_1 = \frac{u_{d_0}}{\abs{u_{d_0}}},
    \end{align*}
    and
    \begin{align*}
        b_1 = b^{(1)} := -W_1 p_1.
    \end{align*}
    Then Lemma \ref{lemma} implies we have the following truncation map
    \begin{align*}
       \tau _{W^{(1)}, b^{(1)}}(x) =
       \begin{cases}
           x, \quad &x \in p_1 + \mathfrak{C}_{\theta_{1}}[h_1], \\
           p_1, \quad &x \in p_1 + \mathfrak{C}_{\theta_{1}}[-h_1],
       \end{cases}
    \end{align*}
    which, according to \eqref{cones1}, acts on the data as
    \begin{align*}
       X_{0,j}^{(\tau ,1)} = \tau _{W^{(1)}, b^{(1)}}(X_{0,j}) =
       \begin{cases}
           p_1 u_{N_1}^{T}, \quad &j=1, \\
           X_{0,j}, \quad &j >1,
       \end{cases}
    \end{align*}
    so that the first layer has successfully shrunk the first equivalence class of data
    points $\mathcal{X}_{0,1}$ to a single point.
    
    \vspace{1ex}
    \textbf{The induction step \texorpdfstring{$\ell -1 \to \ell$}{l-1 -> l}.}
    Assume 
    \begin{align}
        \label{ind-hyp}
        X_{0,j}^{(\tau, \,\ell -1)} = 
        \begin{cases}
            p_{j} u_{N_{j}}^{T}, \quad &j \leq \ell -1, \\
            X_{0,j}, \quad &j > \ell -1,
        \end{cases}
    \end{align}
    where $p_{j}$ are the points given by \eqref{pjhj}. We will construct cumulative
    parameters $W^{(\ell )}, b^{(\ell )}$ such that
    \begin{align}
       \label{ind-concl}
       X_{0,j}^{(\tau ,\ell)} = \tau_{W^{(\ell )}, b^{(\ell )}} (X_{0,j}^{(\tau, \ell-1)}) = 
       \begin{cases}
           p_{j} u_{N_{j}}^{T} = X_{0,j}^{(\tau, \ell -1)}, \quad &j < \ell , \\
           p_{\ell} u_{N_{\ell}}^{T}, \quad &j=\ell, \\
           X_{0,j} = X_{0,j}^{(\tau, \ell -1)}, \quad &j > \ell.
       \end{cases}
    \end{align}
    By sequential linear separability of the data, there exists a hyperplane $H_{\ell} =
    H(p_{\ell},h_{\ell })$ in
    the collection given by \eqref{pjhj}
    separating $\mathcal{X}_{0,\ell}$ and $\bigcup_{j<\ell} \{p_{j}\} \cup \bigcup_{j>\ell }
    \mathcal{X}_{0,j}$.
    We can use the induction hypothesis \eqref{ind-hyp} to rewrite 
    \begin{align*}
        \mathcal{X}_{0,\ell } = \mathcal{X}_{0,\ell }^{(\tau ,\ell -1)} \,\text{ and }\,
        \bigcup_{j<\ell} \{p_{j}\} \cup \bigcup_{j>\ell } \mathcal{X}_{0,j} = 
        \bigcup_{j\neq \ell } \mathcal{X}_{0,j}^{(\tau , \ell -1)},
    \end{align*}
    and so $H_{\ell }$ separates $\mathcal{X}_{0,\ell }^{(\tau ,\ell-1)}$ and
    $\bigcup_{j\neq \ell } \mathcal{X}_{0,j}^{(\tau , \ell -1)}$.
    
    Then by \eqref{thetaj}, there is a $\theta_{\ell} < \pi$ such that
    \begin{align}
        \label{ellcones}
        \bigcup_{j\neq \ell } \mathcal{X}_{0,j}^{(\tau , \ell -1)} \subset
        p_{\ell} + \mathfrak{C}_{\theta_{\ell }}[h_{\ell }]  
        \quad \text{ and } \quad
        \mathcal{X}_{0,\ell } \subset
        p_{\ell } + \mathfrak{C}_{\theta_{\ell }}[- h_{\ell }].
    \end{align}
    Now we can choose 
    \begin{align}
        \label{Wellproof}
        W^{(\ell )} := W_{\theta_{\ell}} R_{\ell} \in \mathbb{R}^{d_{0} \times d_0},
        \nonumber \\ 
        b^{(\ell )} := - W^{(\ell )}\,  p_{\ell } \in \mathbb{R}^{d_{0}},
    \end{align}
    for $W_{\theta_{\ell }}$ defined as in \eqref{Wtheta} and $R_{\ell }\in \SO(d_{0},
    \mathbb{R})$ such that
    \begin{align*}
        R_{\ell } h_{\ell } = \frac{u_{d_0}}{\abs{u_{d_0}}},
    \end{align*}
    so that Lemma \ref{lemma} gives
    \begin{align*}
       \tau _{W^{(\ell)}, b^{(\ell)}}(x) =
       \begin{cases}
           x, \quad &x \in p_{\ell} + \mathfrak{C}_{\theta_{\ell}}[h_{\ell}], \\
           p_{\ell}, \quad &x \in p_{\ell} + \mathfrak{C}_{\theta_{\ell}}[-h_{\ell}],
       \end{cases}
    \end{align*}
    which, along with \eqref{ellcones}, implies \eqref{ind-concl}.
    
    \vspace{1ex}
    \textbf{Conclusion of proof.} 
    At the end of $Q$ layers, or $Q$ applications of the truncation maps $\tau
    _{W^{(\ell)}, b^{(\ell)}}$, we have mapped each equivalence class $\mathcal{X}_{0,j}$ 
    to a single point, $(\mathcal{X}_{0,j})^{(\tau, Q)} = p_{j}$, for all $j=1, \cdots,
    Q$. This implies that $(\Delta X_0)^{(\tau, Q)} =0$.

    In the last layer, we have
    \begin{align*}
        X^{(Q+1)} &= W^{(Q+1)}(X_0)^{(\tau, Q)} + B^{(Q+1)},
    \end{align*}
    and we wish to minimize \footnote{Recall the matrices defined in \eqref{X_0red}.}
    \begin{align*}
        \min_{W^{(Q+1)}, b^{(Q+1)}} \norm{W^{(Q+1)}(X_0)^{(\tau, Q)} + B^{(Q+1)} -
        Y^{ext}}^{2}_{\mathcal{L}^{2}}.
    \end{align*}
    This can be achieved by letting 
    \begin{align*}
        b^{(Q+1)} := 0,
    \end{align*}
    and solving
    \begin{align*}
        W^{(Q+1)} (X_0)^{(\tau , Q)} = Y^{ext}.
    \end{align*}
    Since  
    \begin{align*}
        (X_0)^{(\tau, Q)} &= ( \overline{X_0} + \Delta X_0)^{(\tau , Q)} \nonumber \\
                          &= ( \overline{X_0})^{(\tau ,Q)} + (\Delta X_0)^{(\tau ,
                          Q)} \\
                          &= ( \overline{X_0})^{(\tau ,Q)},\nonumber 
    \end{align*}
    we can equivalently solve
    \begin{align}
        \label{leastsq}
        W^{(Q+1)} (\overline{X_0^{red}})^{(\tau , Q)} = Y.
    \end{align}
    As the collection $\{p_{j}\}_{j=1}^{Q}$ is 
    assumed linearly independent (see Remark \ref{rmk-li}),  
    \begin{align*}
        (\overline{X_0^{red}})^{(\tau , Q)} = \left[ p_1 \cdots p_{Q}\right]
    \end{align*}
    satisfies
    \begin{align*}
        \pen{(\overline{X_0^{red}})^{(\tau , Q)}} (\overline{X_0^{red}})^{(\tau , Q)} =
        \Id_{Q\times Q}, 
    \end{align*}
    and so
    \begin{align}
        \label{WQ+1proof}
        W^{(Q+1)} = Y \pen{(\overline{X_0^{red}})^{(\tau , Q)}}
    \end{align}
    is a solution to \eqref{leastsq}, and we have in fact produced a collection of weights
    and biases $(W_{i}^{**}, b_{i}^{**})_{i=1}^{Q+1}$ given by the cumulative parameters
    defined in \eqref{Wellproof} and \eqref{WQ+1proof} that achieves zero cost,
    \begin{align*}
        \mathcal{C}[(W_{i}^{**}, b_{i}^{**})_{i=1}^{Q+1}] = 0.
    \end{align*}

    \vspace{1ex}
   Finally, the minimum we have constructed is degenerate: for a fixed collection of
   hyperplanes, the angles $(\theta_{\ell})_{\ell =1}^{Q}$ of the associated cones can
   vary in $(\theta_{1,min}, \pi)\times \cdots \times (\theta_{Q,min}, \pi)$, and so
   parametrize a family of cumulative weights $(W^{(\ell)}[\theta_{\ell }])_{\ell=1}^{Q}$.  
\end{proof}

\section{Additional Remarks} \label{sec:rmk}

We make here a few remarks concerning generalization and neural collapse.

\subsection{Generalization} \label{sec:gen}

Let $ \rho \in \mathbb{P}(Z)$ be a probability distribution on
$Z = X \times Y$, the set of data $x \in X$ and labels $y\in Y \subset
\mathbb{R}$. Following the most general setup in the book by \citet{vapnik}, we consider 
\begin{align*}
    \left\{Q(z, \alpha ), \alpha \in \Lambda \right\}
\end{align*}
to be a parametrized set of functions representing the composition of a loss function with
a function $f_{\alpha}:X \to Y$.  
The real risk of a function $f_{\alpha}$ 
on data distributed according to $\rho$ is
\begin{align*}
    R(\alpha) := \int_{Z} Q(z, \alpha ) \dif \rho(z).
\end{align*}
Given a training set $\left\{z_1, \cdots, z_{n}\right\}$, the empirical risk is 
\begin{align*}
    R_{emp}(\alpha) := \frac{1}{n} \sum_{i=1}^{n} Q(z_{i}, \alpha ).
\end{align*}
Then if $a \leq Q(z, \alpha ) \leq b$, the following holds
for any $\alpha \in \Lambda$ and with probability at least $1- \eta$:
\begin{align*}
    R(\alpha ) \leq R_{emp}(\alpha ) + (b-a)\sqrt{\frac{VC(\Lambda)\log(n+1) -
    \log(\frac{\eta}{4})}{n}},
\end{align*}
where $VC(\Lambda)$ is the Vapnik--Chervonenkis dimension of the function class
parametrized by $\Lambda$.
In particular, for an empirical risk minimizer which achieves zero loss,
\begin{align}
    \label{genbound}
    R(\alpha_{*}) \leq  (b-a)\sqrt{\frac{VC(\Lambda)\log(n+1) -
    \log(\frac{\eta}{4})}{n}}.
\end{align}

In this work, $X = \mathbb{R}^{d_0}$, $Y = \mathbb{R}^{Q}$, the functions
$f_{\alpha}:\mathbb{R}^{d_0} \to \mathbb{R}^{Q}$ are ReLU neural networks parametrized by
weights and biases, and we consider
the squared loss, so
\begin{align*}
    Q(z = (x, y), \alpha ) = \abs{f_{\alpha}(x) - y}^{2}.
\end{align*}
The loss function can be bounded, for instance, by imposing bounds on
the norms of the parameters and the input data. See work by \citet{chenetal25} for a more
detailed discussion.  

If the VC dimension of the function class is finite,\footnote{It is known that the VC
    dimension of real valued ReLU neural networks is finite, and explicit bounds exist \citep[see e.g.,][]{bartlettetal19}.} then \eqref{genbound} implies that the results obtained
in the work at hand generalize well to test data which is identically distributed to the
training data, as they do not depend on the size of the training data set.

\subsection{Cost Decomposition and Neural Collapse} \label{sec:nc}

Note that since the constructions in this paper do not depend on the cost function chosen,
we can change it to average over each class separately,\footnote{Equivalently, we could
consider balanced classes.}
\begin{align*}
    \mathcal{C}_{\mathcal{N}}[(W_{i},b_{i})_{i=1}^{L}] &:= \sum_{j=1}^{Q}
    \frac{1}{N_{j}} \sum_{i=1}^{N_{j}} \abs{x_{0,j,i}^{(L)} - y_{j}}^{2} 
                                                    \nonumber \\ &=
    \norm{X^{(L)}-Y^{ext}}^{2}_{\mathcal{L}_{\mathcal{N}}^{2}},
\end{align*}
where $\mathcal{N}:= \diag(N_{j} \Id_{N_{j}\times N_{j}}| j=1, \cdots, Q)$ and 
$\langle A,B\rangle_{\mathcal{L}_{\mathcal{N}}^{2}} := \Tr(A \mathcal{N}^{-1}
    B^{T})$.
For simplicity, throughout this section we will refer to $\mathcal{C}_{\mathcal{N}}$ as
$\mathcal{C}$. This can be decomposed as follows
\begin{align}
    \label{cost}
    \sum_{j=1}^{Q} \frac{1}{N_{j}} \sum_{i=1}^{N_{j}} \abs{x_{0,j,i}^{(L)} -
    y_{j}}^{2} &= 
    \sum_{j=1}^{Q} \frac{1}{N_{j}} \sum_{i=1}^{N_{j}} \left( \abs{x_{0,j,i}^{(L)}}^{2}
    - 2 \langle x_{0,j,i}^{(L)},y_{j}\rangle + \abs{y_{j}}^{2} \right) \nonumber \\ 
               &\hspace{-37pt}= \sum_{j=1}^{Q} \left( \frac{1}{N_{j}} \sum_{i=1}^{N_{j}}
               \abs{x_{0,j,i}^{(L)}}^{2} - \abs{\overline{x_{0,j}^{(L)}}}^{2} +
               \abs{\overline{x_{0,j}^{(L)}}}^{2} - 2 \langle
               \frac{1}{N_{j}} \sum_{i=1}^{N_{j}} x_{0,j,i}^{(L)},y_{j}\rangle +
           \frac{N_{j}}{N_{j}} \abs{y_{j}}^{2} \right) \nonumber \\
    &\hspace{-37pt}= \sum_{j=1}^{Q}  \left( \frac{1}{N_{j}} \sum_{i=1}^{N_{j}}
               \abs{x_{0,j,i}^{(L)} - \overline{x_{0,j}^{(L)}}}^{2} +
               \abs{\overline{x_{0,j}^{(L)}} - y_{j}}^{2}  \right)  \\
    &\hspace{-37pt}= \sum_{j=1}^{Q}  \frac{1}{N_{j}} \sum_{i=1}^{N_{j}}
               \abs{\Delta x_{0,j,i}^{(L)}}^{2} + \sum_{j=1}^{Q}
               \abs{\overline{x_{0,j}^{(L)}} - y_{j}}^{2} \nonumber  \\
    &\hspace{-37pt}= \sum_{j=1}^{Q}  \frac{1}{N_{j}} \sum_{i=1}^{N_{j}}
               \abs{W^{(L)} \Delta x_{0,j,i}^{(\tau, L-1)}}^{2} + \sum_{j=1}^{Q}
               \abs{W^{(L)} \overline{x_{0,j}^{(\tau, L-1)}} + b^{(L)} - y_{j}}^{2}.
                \nonumber
\end{align}
Since both terms are non-negative, the cost will be zero if, and only if, each
term is zero, and so we
have the following interpretation for the results presented in this paper: the
truncation maps implemented by the hidden
layers have the role of minimizing the first term, by which all variances are reduced
to zero, while the last (affine) layer
minimizes the second term, whereby the class averages are matched with the reference
outputs $y_{j}$.

\vspace{1em}
\citet*{hanpapyandonoho22} observed that when training a classifier to zero
mean square loss, the following phenomena are observed on the penultimate layer features:
\begin{itemize}
    \item (NC1) Within-class variability collapse, which for instance occurs if $\Delta
        X_{0}^{(L-1)} \to 0$.
\item (NC2) Convergence to simplex ETF: the vectors $(\overline{x_{0,j}^{(L-1)}} -
    \overline{x^{(L-1)}})$ evolve to have maximal angle between them, and the same
        length, forming an equilateral simplex, or simplex equiangular tight
        frame (ETF).
    \item (NC3) Convergence to self-duality: $f_{j}^{(L-1)} + w_{j} \to 0$, where
    $f_{j}^{(L-1)}$ are unit vectors pointing from $\overline{x_{0,j}^{(L-1)}}$ to
    $\overline{x^{(L-1)}}$, defined analogously to \eqref{fj}, and $w_{j}$ are the
        rows of $W_{L}$, for $j=1, \cdots, Q$.
\end{itemize}
They consider balanced classes and decompose the mean square loss into the sum of a
least-squares term and the remainder, and then further decompose the least-squares term
into terms associated to the properties above:
\begin{align}
    \label{NCcost}
    \mathcal{C} &= \mathcal{C}_{LS} + \mathcal{C}_{LS}^{\perp}
    = (\mathcal{C}_{NC1} + \mathcal{C}_{NC2,3}) + \mathcal{C}_{LS}^{\perp}.
\end{align}
Further, 
they provide empirical evidence that $\mathcal{C}_{LS}^{\perp}$ becomes negligible fast
during training.

\vspace{1em}
We will make some remarks comparing \eqref{NCcost} with the cost decomposition
\eqref{cost} and the results in this paper.
The last line of \eqref{cost} can be written in two different ways,
\begin{align}
   \mathcal{C} &= \sum_{j=1}^{Q}  \frac{1}{N_{j}} \sum_{i=1}^{N_{j}}
              \abs{W^{(L)} \Delta x_{0,j,i}^{(\tau, L-1)}}^{2} + \sum_{j=1}^{Q}
              \abs{W^{(L)} \overline{x_{0,j}^{(\tau, L-1)}} + b^{(L)} - y_{j}}^{2}
              \label{LS2} \\ 
               &= \sum_{j=1}^{Q}  \frac{1}{N_{j}} \sum_{i=1}^{N_{j}} \abs{W_{L} \Delta
               x_{0,j,i}^{(L-1)}}^{2} \,\, + \,\,\, \sum_{j=1}^{Q} \abs{W_{L} \overline{
           x_{0,j}^{(L-1)}} + b_{L} - y_{j}}^{2}.
           \label{LS1} 
\end{align}
If we let $b_{L}=0$ and $W_{L}=W_{LS}$ be the least squares matrix that minimizes the
second term in \eqref{LS1}, then $\mathcal{C} = \mathcal{C}_{LS}$, the
first term in \eqref{LS1} is $\mathcal{C}_{NC1}$, and the second term is
$\mathcal{C}_{NC2,3}$ (up to constants). Comparing with the interpretation given 
after \eqref{cost}, this suggests that the truncation maps in the hidden layers drive
NC1, while the last-layer affine map is responsible for NC2,3.

Let us compare the last-layer weights and biases obtained by minimizing according to
\eqref{LS1} and \eqref{LS2}. 
For the rest of this section we take $Y = \Id_{Q\times Q}$, $\overline{X^{(\tau,
\ell)}} := \left[ \overline{x_{0,1}^{(\tau, \ell)}} \cdots \overline{x_{0,Q}^{(\tau,
\ell)}} \right]$ and $\overline{X^{(
\ell)}} := \left[ \overline{x_{0,1}^{(\ell)}} \cdots \overline{x_{0,Q}^{(
\ell)}} \right]$.
In this work we have minimized with respect to \eqref{LS2},
\begin{align*}
    \tilde{W}^{(L)} = \pen{\overline{X^{(\tau, L-1)}}}, \, b^{(L)} = 0,
\end{align*}
which yields
\begin{align*}
    \tilde{W}_{L} &= \tilde{W}^{(L)} \pen{W^{(L-1)}} \nonumber \\
                  &= \pen{\overline{X^{(\tau, L-1)}}} \pen{W^{(L-1)}}.
\end{align*}
Minimizing with respect to \eqref{LS1}, as \citet{hanpapyandonoho22}, gives
\begin{align*}
    \label{WLS}
    W_{L} &= \pen{\overline{X^{(L-1)}}}, \, b_{L} = 0.
\end{align*}
We rewrite $W_{L}$: from Proposition \ref{prop-tau}, 
\begin{align*}
    \overline{X^{(L-1)}} &= W^{(L-1)}\overline{X^{(\tau, L-1)}} + B^{(L-1)} \nonumber \\
                         &= W^{(L-1)} \left( \overline{X^{(\tau, L-1)}} + \beta^{(L-1)}
                             \right)
\end{align*}
for
\begin{align*}
    \beta^{(L-1)} := \pen{W^{(L-1)}}B^{(L-1)},
\end{align*}
assuming $W^{(L-1)}$ is surjective.
Define the orthogonal projectors
\begin{alignat}{2}
    \mathcal{P}_{W} &:= \hspace{1.8em} \mathcal{P}_{(\ker W^{(L-1)})^{\perp}} 
                    &&= \pen{W^{(L-1)}}W^{(L-1)},\\ 
    \mathcal{P}_{X} &:= \mathcal{P}_{\ran(\overline{X^{(\tau,
    L-1)}} + \beta^{(L-1)})} 
                    &&= \left(\overline{X^{(\tau, L-1)}} + \beta^{(L-1)} \right)
                    \left(\overline{X^{(\tau, L-1)}} + \beta^{(L-1)}\right)^{+}.
\end{alignat}
Then
\begin{align*}
    W_{L} &= \left(W^{(L-1)} \left( \overline{X^{(\tau, L-1)}} + \beta^{(L-1)}
                             \right)\right)^{+} \nonumber \\
          &\stackrel{(1)}{=}  \left( \mathcal{P}_{W} \left[\overline{X^{(\tau, L-1)}} +
          \beta^{(L-1)}\right] \right)^{+} \left(W^{(L-1)} \mathcal{P}_{X} \right)^{+} ,
\end{align*}
where $(1)$ follows from a result that can be found in the book by \citet[Theorem
1.4.1]{cm-generalizedinverses}. Now, observe that
\begin{align}
    \label{ran-ker}
    \ran\left(\overline{X^{(\tau, L-1)}} + \beta^{(L-1)} \right) \subset \ran
    \pen{W^{(L-1)}} = \left( \ker W^{(L-1)}\right)^{\perp},
\end{align}
so
\begin{align*}
    \mathcal{P}_{W}\mathcal{P}_{X} = \mathcal{P}_{X}\mathcal{P}_{W} = \mathcal{P}_{X},
\end{align*}
and we conclude
\begin{align*}
    W_{L} &= \left(\overline{X^{(\tau, L-1)}} +
          \beta^{(L-1)}\right)^{+} \left(W^{(L-1)} \mathcal{P}_{X} \right)^{+} .
\end{align*}

If the sets in \eqref{ran-ker} are equal, then $\mathcal{P}_{W}=\mathcal{P}_{X}$ and
$W_{L}$ further simplifies to 
\begin{align*}
    W_{L} &= \left(\overline{X^{(\tau, L-1)}} +
          \beta^{(L-1)}\right)^{+} \left(W^{(L-1)} \right)^{+},
\end{align*}
so $W_{L}$ and $\tilde{W}_{L}$ would differ only in the $\beta^{(L-1)}$ term.
This could be achieved, for instance, by taking $d_{L-1} = Q$ and  $\overline{X^{(\tau,
L-1)}} \in \mathbb{R}^{M \times Q}$
injective (recall we are already assuming $W^{(L-1)}$ to be surjective). 
Suppose this is the case, for simplicity, and consider two subcases:
\begin{itemize}
    \item {First, if
    $b^{(L-1)} = 0$, then
    \begin{align*}
        W_{L} = \tilde{W}_{L} \text{ and } b^{(L)} = W_{L}b^{(L-1)} +
        b_{L} = b_{L} = 0
    \end{align*}
    and we are in the setting of
    \citet{hanpapyandonoho22}. This
    constrains the cone at the last nonlinear layer to be based at the origin.}
    \item{If we wish to consider strong neural
    collapse (SNC) as defined by \citet{mixonetal22}, then
    \footnote{Note that SNC is not reasonable if the penultimate layer features come from
    a ReLU neural network, as $\overline{x^{(L-1)}} = 0$ would imply that
    $x_{0,j,i}^{(L-1)}= 0$ for all $j=1, \cdots, Q$, $i=1, \cdots, N_{j}$. Still, the
study of the ranks of the matrices and comparison with the first subcase is instructive.}
    \begin{align*}
        \overline{x^{(L-1)}} = Q^{-1} \sum_{j=1}^{Q} \overline{x_{0,j}^{(L-1)}} =0,
    \end{align*}
    which implies that $\overline{X^{(L-1)}} \in \mathbb{R}^{Q\times Q}$ cannot be full
    rank. To conserve injectivity of
    $\overline{X^{(\tau, L-1)}}$, we must have $b^{(L-1)} \neq 0$, and therefore
    $\tilde{W}_{L} \neq W_{L}$.
    Indeed, SNC assumes
    \begin{align*}
        W_{L} W_{L}^{T} = \sqrt{Q} \left(\Id_{Q\times Q} - \frac{1}{Q}
        u_{Q}u_{Q}^{T}\right),\quad
        b_{L} = \frac{1}{Q} u_{Q},
    \end{align*}    
    and in that case not only is the last-layer weight matrix not the least squares matrix
    \eqref{WLS}, it is not surjective. 
    To fit this, the constructions in Proposition
    \ref{decreasingwidths} and Theorem \ref{mainthm} could be modified so that $W^{(L)}\in
    \mathbb{R}^{Q \times M}$ has rank $Q-1$
    while still achieving
    zero loss:
    \begin{align*} 
        W^{(L)} = \left(\Id_{Q\times Q} - B^{(L)}\right) \overline{X^{(\tau,
        L-1)}}^{+}, \quad b^{(L)} = \frac{1}{Q}u_{Q}.
    \end{align*}
    }
\end{itemize}

The remarks in this
section indicate that the constructions presented in this paper align well with the
framework of neural collapse, at least in certain cases or with minor adaptations.
Therefore, employing the truncation map as a tool to investigate the analytic and
geometric properties of deep neural networks could also be advantageous in this context.

\section{Proof of Proposition \ref{decreasingwidths}} \label{sec:proof}

To prove the proposition, we use barycentric coordinates. We consider a point
$x\in \mathbb{R}^{M}$ to be represented by
\begin{align}
    \label{bary}
    x = \sum_{j=1}^{Q} \kappa_{j} \overline{x_{0,j}} + \tilde{x}
\end{align}
where $\tilde{x}\in (\spn\{\overline{x_{0,j}}\}_{j=1}^{Q})^{\perp}$, and
the means $\overline{x_{0,j}}$ form the vertices of a $Q$-simplex for which $\overline{x}$
is the barycenter (see Figure \ref{fig:simplex}). Switching to these coordinates can be done when preprocessing the data.

\vspace{1em}

\begin{proof}
   Assume the data $\mathcal{X}_{0}$ is given in barycentric coordinates. 
   We wish to construct $W^{(\ell)}, b^{(\ell)}$, $\ell =1, \cdots, Q$,  such that
   $\Delta X_{0}^{(\tau , Q)}=0$, and the last layer can be explicitly solved to
   achieve $X^{(L)}=Y^{ext}$.

   Let
   \begin{align*}
       \theta_{*} := \theta_{0} + \max_{j} \theta_{*,j},
   \end{align*}
   for some $\theta_{0} \geq 0$ such that $\theta_{*} < \pi$. For convenience we also
   require that $\theta_{0}$ is such that $\theta_{*} \geq \frac{\pi}{2}$.
   Recall from \eqref{P(l)}
   \begin{align*}
       P^{(\ell )} &= \pen{W^{(\ell )}} W^{(\ell )} \pen{W^{(\ell -1)}} W^{(\ell -1)}
       \cdots \pen{W^{(1)}}W^{(1)} \\ \nonumber
                   &= \pen{W^{(\ell )}} W^{(\ell )} P^{(\ell -1)}.
   \end{align*}
   
   \vspace{1ex}
   \textbf{Induction base \texorpdfstring{$\ell =1$}{l=1}.}
   If we choose 
   \begin{align*}
       W_1 = W^{(1)} := \P{d_0}{d_1} W_{\theta_{*}} R_1,
   \end{align*}
   for $W_{\theta_{*}}$ defined as in \eqref{Wtheta}, and $R_1\in \SO(d_0,\mathbb{R})$ such
   that 
   \begin{align*}
       R_1 f_1 = \frac{u_{d_0}}{\abs{u_{d_0}}},
   \end{align*}
   and also
   \begin{align*}
       b_1 = b^{(1)} := -W_1 (\overline{x_{0,1}} + \mu_1 f_1),
   \end{align*}
   for some $\mu_1$,
   then by Lemma \ref{lemma}, 
   \begin{align*}
      \tau _{W^{(1)}, b^{(1)}}(x) =
      \begin{cases}
          P^{(1)} x, \quad &x \in (\overline{x_{0,1}} + \mu_1 f_1) + \mathfrak{C}_{\theta_{1}}[f_1], \\
          P^{(1)} (\overline{x_{0,1}} + \mu_1 f_1) , \quad &x \in (\overline{x_{0,1}} + \mu_1 f_1) +
          \mathfrak{C}_{\theta_{1}}[-f_1].
      \end{cases}
   \end{align*}
   Thus, we wish to find $\mu_1$ such that $\mathcal{X}_{0,1}$ is entirely contained in
   the backward cone, and $\bigcup_{j\neq 1} \mathcal{X}_{0,j}$ is entirely contained in
   the forward cone.
   We have chosen $\theta_{*}$ such that
   \begin{align*}
       \bigcup_{j\neq 1} B_{4 \delta}( \overline{x_{0,j}}) \subset \overline{x_{0,1}} +
       \mathfrak{C}_{\theta_{*}}[f_{1}],
   \end{align*}
   and so $\mu_1 < 3 \delta$ suffices\footnote{This is not optimal.} for
   \begin{align*}
       \bigcup_{j\neq 1} B_{\delta}( \overline{x_{0,j}}) \subset (\overline{x_{0,1}} +
       \mu_1 f_1) + \mathfrak{C}_{\theta_{*}}[f_{1}].
   \end{align*}
   Then, as we have also assumed that $\theta_{*} \geq \frac{\pi}{2}$, and since in
   general\footnote{See appendix.}
   \begin{align*}
       B_{\delta}(0) \subset - 2 \delta f + \mathfrak{C}_{\frac{\pi}{2}}[f]
   \end{align*} 
   for any $\delta >0$ and unit vector $f \in \mathbb{R}^{d_0}$, 
   it is sufficient to assume that $\mu_1 > 2 \delta$ for
   \begin{align*}
       B_{\delta}( \overline{x_{0,1}}) \subset (\overline{x_{0,1}} + \mu_1 f_1) +
       \mathfrak{C}_{\theta_{*}}[-f_{1}].
   \end{align*}
   Hence, for $\mu_1 \in (2 \delta,\, 3 \delta)$, 
   \begin{align*}
      \tau _{W^{(1)}, b^{(1)}}(X_{0,j}) =
      \begin{cases}
          P^{(1)} X_{0,j}, \quad &j >1,\\
          P^{(1)} (\overline{x_{0,1}} + \mu_1 f_1) u_{N_1}^{T}, \quad &j=1.
      \end{cases}
   \end{align*}
   
   \vspace{1ex}
   \textbf{The induction step \texorpdfstring{$\ell -1 \to \ell$}{l-1 -> l}.}
   Assume 
   \begin{align}
       \label{ind-hyp1}
       X_{0,j}^{(\tau ,\ell -1)} = 
       \begin{cases}
           P^{(\ell-1)} (\overline{x_{0,j}} + \mu_{j} f_{j})  u_{N_{j}}^{T}, \quad &j \leq \ell -1, \\
           P^{(\ell-1)} X_{0,j}, \quad &j > \ell -1,
       \end{cases}
   \end{align}
   for some collection $\{\mu_{j}\}_{j=1}^{\ell -1} \subset (2 \delta, \, 3 \delta )$.  We
   will construct $W^{(\ell)}, b^{(\ell)}$ such that
   \begin{align}
      \label{ind-concl1}
      X_{0,j}^{(\tau ,\ell)} = \tau_{W^{(\ell )}, b^{(\ell )}} (X_{0,j}^{(\tau, \ell-1)}) = 
      \begin{cases}
          P^{(\ell)} (\overline{x_{0,j}} + \mu_{j} f_{j}) u_{N_{j}}^{T} 
          , \quad &j < \ell , \\ 
          P^{(\ell)} (\overline{x_{0,\ell }} + \mu_{\ell } f_{\ell }) u_{N_{\ell}}^{T},
          \quad &j=\ell, \\
          P^{(\ell)} X_{0,j} 
          , \quad &j > \ell.
      \end{cases}
   \end{align}
   
   Let 
   \begin{align}
       \label{Wellproofc}
       W^{(\ell )} &:= \P{d_0}{d_{\ell}} W_{\theta_{*}} R_{\ell}, \nonumber \\
       b^{(\ell )} &:= - W^{(\ell )}\, (\overline{x_{0,\ell }} + \mu_{\ell } f_{\ell })  ,
   \end{align}
   for $\P{d_0}{d_{\ell}}$ the projection defined in \eqref{proj}, $\mu_{\ell} \in (2
       \delta, \, 3 \delta)$ and $R_{\ell} \in \SO(d_0, \mathbb{R})$ such that
   \begin{align*}
       R_{\ell } h_{\ell } = \frac{u_{d_0}}{\abs{u_{d_0}}}.
   \end{align*}
   With these definitions and Lemma \ref{lemma}, we have the following truncation map with
   its associated cones,
   \begin{align*}
      \tau _{W^{(\ell)}, b^{(\ell)}}(x) =
      \begin{cases}
          \pen{W^{(\ell)}} W^{(\ell)}\, x, \quad &x \in (\overline{x_{0,\ell }} +
          \mu_{\ell } f_{\ell }) + \mathfrak{C}_{\theta_{\ell}}[f_{\ell}], \\
          \pen{W^{(\ell)}} W^{(\ell)}\, (\overline{x_{0,\ell }} + \mu_{\ell } f_{\ell }),
          \quad &x \in (\overline{x_{0,\ell }} + \mu_{\ell } f_{\ell}) +
      \mathfrak{C}_{\theta_{\ell}}[-f_{\ell}].  
      \end{cases}
   \end{align*}
   Just as in the base case, $\mu_{\ell}< 3 \delta$ implies
   \begin{align}
       \label{fcone}
       \bigcup_{j\neq \ell} B_{\delta}( \overline{x_{0,j}}) \subset (\overline{x_{0,\ell}}
       + \mu_{\ell} f_{\ell}) + \mathfrak{C}_{\theta_{*}}[f_{\ell}],
   \end{align}
   so that $\bigcup_{j\neq \ell} \mathcal{X}_{0,j}$ sits in the forward cone, and
   $\mu_{\ell } > 2 \delta$ implies
   \begin{align}
       \label{bcone}
       B_{\delta}(\overline{x_{0,\ell}}) \subset (\overline{x_{0,\ell}} + \mu_{\ell}
       f_{\ell}) + \mathfrak{C}_{\theta_{*}}[-f_{\ell}].
   \end{align}
   so that $\mathcal{X}_{0,\ell}$ sits inside the backward cone. However, now that the
   data has already been truncated at least once, it takes a little more work to justify
   how $\tau_{W^{(\ell)}, b^{(\ell)}}$ acts on each $X_{0,j}^{(\tau,\ell -1)}$.

   For $j < \ell$, we have from \eqref{ind-hyp1} that
   \begin{align*}
       \mathcal{X}_{0,j}^{(\tau ,\ell -1)} = P^{(\ell -1)}( \overline{x_{0,j}} +
       \mu_{j}f_{j}),
   \end{align*}
   and since $W^{(\ell)}P^{(\ell -1)}=W^{(\ell)}$ by \eqref{WP=W} in Lemma
   \ref{penrose-concat}, we can use \eqref{tauP} to get
   \begin{align*}
       \tau_{W^{(\ell)}, b^{(\ell)}}(P^{(\ell -1)}( \overline{x_{0,j}} +
       \mu_{j}f_{j})) = \tau_{W^{(\ell)}, b^{(\ell)}}(\overline{x_{0,j}} +
       \mu_{j}f_{j}).
   \end{align*}
   We claim that 
   \begin{align}
       \label{j<lclaim}
       ( \overline{x_{0,j}}+ \mu_{j}f_{j}) \in ( \overline{x_{0,\ell }} + \mu_{\ell
       }f_{\ell }) + \mathfrak{C}_{\theta_{*}}[f_{\ell }].
   \end{align}
   Indeed, 
   note that for all $j'=1, \cdots, Q$,
   \begin{align}
       \label{xbar2}
       \overline{x} = \overline{x_{0,j'}} + \abs{\overline{x_{0,j'}} -\overline{x}}f_{j'},
   \end{align}
   so it is clear that
   \begin{align*}
       \overline{x} \in \overline{x_{0,\ell }} + \mathfrak{C}_{\theta_{*}}[f_{\ell }],
   \end{align*}
   and if $\mu < \abs{\overline{x_{0,\ell }} - \overline{x}}$, then also
   \begin{align}
       \label{xbarcone}
       \overline{x} \in ( \overline{x_{0,\ell }} + \mu f_{\ell }) + \mathfrak{C}_{\theta
       _{*}}[f_{\ell }].
   \end{align}
   By assumption \eqref{clustercond1}, 
   \begin{align*}
       \delta < \frac{1}{4} \min_{j}\abs{\overline{x_{0,j}} - \overline{x}},
   \end{align*}
   so in particular
   \begin{align*}
       \mu_{\ell } < 3 \delta < \abs{\overline{x_{0,\ell }} - \overline{x}}.
   \end{align*}
   Again by \eqref{clustercond1} and \eqref{xbar2},
   \begin{align*}
       \mu_{j} < 3 \delta < \abs{\overline{x_{0,j}} - \overline{x}}
   \end{align*}
   and $(\overline{x_{0,j}} + \mu_{j}f_{j})$ is in the line segment between
   $\overline{x_{0,j}}$ and $\overline{x}$, which is entirely contained in the forward
   cone by \eqref{fcone}, \eqref{xbarcone} and the convexity of the cone. Having proved
   \eqref{j<lclaim}, 
   \begin{align*}
       X_{0,j}^{(\tau ,\ell )} &= \tau_{W^{(\ell)},b^{(\ell)}} \left(X_{0,j}^{(\tau, \ell
       -1)}\right)
       \nonumber \\ 
                               &= \pen{W^{(\ell)}} W^{(\ell)} X_{0,j}^{(\tau , \ell-1)}
       \nonumber \\
                               &= \pen{W^{(\ell)}} W^{(\ell)}P^{(\ell -1)}
                               (\overline{x_{0,j}}+ \mu_{j} f_{j}) u_{N_{j}}^{T} 
                               \\
                               &= P^{(\ell)} (\overline{x_{0,j}}+ \mu_{j} f_{j})
                               u_{N_{j}}^{T}. \nonumber 
   \end{align*}

   For $j = \ell$, we have assumed that
   \begin{align*}
       \mathcal{X}_{0,\ell }^{(\tau ,\ell -1)} = P^{(\ell -1)}\mathcal{X}_{0,\ell },
   \end{align*}
   and once again by \eqref{WP=W}, 
   \begin{align*}
       \tau_{W^{(\ell )},b^{(\ell)}}\left( X_{0,\ell }^{(\tau, \ell -1)}\right) =
       \tau_{W^{(\ell)}, b^{(\ell)}}(X_{0,\ell })
   \end{align*}
   and we can use \eqref{bcone} in
   \begin{align*}
       X_{0,\ell}^{(\tau ,\ell )} &= \tau_{W^{(\ell)},b^{(\ell)}} \left(X_{0,\ell}^{(\tau,
       \ell -1)}\right) \nonumber \\
                                  &= \pen{W^{(\ell)}} W^{(\ell)}
                               (\overline{x_{0,\ell }}+ \mu_{\ell } f_{\ell }) u_{N_{\ell
                               }}^{T}  \\
                               &\stackrel{(*)}{=}  P^{(\ell)}
                               (\overline{x_{0,\ell }}+ \mu_{\ell } f_{\ell }) u_{N_{\ell
                               }}^{T}, \nonumber 
   \end{align*}
   where $(*)$ follows from Lemma \ref{penrose-concat}.

   Similarly, for $j > \ell$,
   \begin{align*}
       \mathcal{X}_{0,j}^{(\tau,\ell -1)} = P^{(\ell -1)}\mathcal{X}_{0,j},
   \end{align*}
   and we can use \eqref{WP=W} and \eqref{fcone} to compute
   \begin{align*}
       X_{0,j}^{(\tau ,\ell )}  &= \tau_{W^{(\ell)},b^{(\ell)}} \left(X_{0,j}^{(\tau, \ell
   -1)}\right)
       \nonumber \\  
                                &= \pen{W^{(\ell)}} W^{(\ell)} X_{0,j}^{(\tau , \ell-1)}
       \nonumber \\
                               &= \pen{W^{(\ell)}} W^{(\ell)} P^{(\ell -1)} X_{0,j} \\
                               &= P^{(\ell)} X_{0,j},\nonumber 
   \end{align*}
   and we have proved \eqref{ind-concl1}.

   \vspace{1ex}
   \textbf{Conclusion of proof.}
   After $L-1=Q$ steps, 
   \begin{align*}
       X_{0,j}^{(\tau ,Q)} =\overline{X_{0,j}^{(\tau ,Q)}} =  P^{(Q)}(
       \overline{x_{0,j}} - \mu_{j} f_{j}) u_{N_{j}}^{T},
   \end{align*}
   so $\Delta X_{0,j}^{(\tau, Q)}= 0.$ In the last layer, we have from 
   \eqref{taulast}
   \begin{align*}
       X^{(L)} = W^{(Q+1)}\overline{X_0^{(\tau, Q)}} + B^{(Q+1)}.
   \end{align*}
   It is easy to check that if $(\overline{X_0^{red}})^{(\tau,Q)}$ is full rank, then
   \begin{align*}
       W^{(Q+1)} &= Y \pen{(\overline{X_0^{red}})^{(\tau,Q)}},\nonumber \\ 
       b^{(Q+1)} &= 0,  
   \end{align*}
   solve
   \begin{align}
       \label{WQ+1proofc}
        W^{(Q+1)}\overline{X_0^{(\tau, Q)}} + B^{(Q+1)} = Y^{ext},
   \end{align}
   so that the cumulative parameters \eqref{Wellproofc} and \eqref{WQ+1proofc} define a
   neural network with weights and biases  $(W_{i}^{**}, b_{i}^{**})_{i=1}^{Q+1}$ such
   that
   \begin{align*}
       X^{(L)} = Y^{ext}
   \end{align*}
   and 
   \begin{align*}
       \mathcal{C}[(W_{i}^{**}, b_{i}^{**})_{i=1}^{Q+1}] = 0.
   \end{align*}

   \vspace{1ex}
   It remains to check that
   $\overline{X_0^{red}}^{(\tau,Q)} \in \mathbb{R}^{M\times Q}$ is indeed full rank, which
   is the same as checking that the set $\left\{ P^{(Q)}( \overline{x_{0,j}} - \mu_{j}
   f_{j})\right\}_{j=1}^{Q}$ is linearly independent.
   First, note that the set of vectors $\left\{(\overline{x_{0,j}} - \mu_{j}
   f_{j})\right\}_{j=1}^{Q}$ is linearly independent: We have assumed that
   $\overline{X_0^{red}}$ is full rank, so the means $\overline{x_{0,j}}$ form the
   vertices of a
   $Q$-simplex which spans a $Q$-dimensional subspace of $\mathbb{R}^{M}$. Each
   $-\mu_{j}f_{j}$ simply shifts a vertex of this simplex towards the barycenter
   $\overline{x}$ in such a way that  $\left\{(\overline{x_{0,j}} - \mu_{j}
   f_{j})\right\}_{j=1}^{Q}$ forms a smaller simplex spanning the same $Q$-dimensional
   subspace.

   Next, we need $P^{(Q)}$ to be suitably rank preserving.
   This can be done by changing the matrices $\tilde{R}$ and $R_{Q}$ in the definitions of
   $W_{\theta_{*}}$ and $W^{(Q)}$, respectively, as we explain now.
   From \eqref{P(l)=PenWW} in Lemma \ref{penrose-concat},
   \begin{align*}
       P^{(Q)} &= \pen{W^{(Q)}} W^{(Q)} \nonumber \\
               &= (W^{(Q)})^{T} \left( W^{(Q)} (W^{(Q)})^{T} \right)^{-1} W^{(Q)} \\
               &= (W_{\theta_{*}} R_{Q})^{T} \P{d_0}{d_{Q}}^{T} (W^{(Q)}W^{(Q)\,T})^{-1}
               \P{d_0}{d_{Q}} W_{\theta_{*}} R_{Q}, \nonumber
   \end{align*}
   then $W_{\theta_{*}}R_{Q}\in GL(d_0, \mathbb{R})$, and $\P{d_0}{d_{Q}}^{T}$ is an
   inclusion, and so
   \begin{align*}
       (W_{\theta_{*}}R_{Q})^{T} \P{d_0}{d_{Q}}^{T} (W^{(Q)}W^{(Q)\,T})^{-1}
   \end{align*}
   does not change the rank of any matrix on which it acts. 

   We now turn to the action of $\P{d_0}{d_{Q}} W_{\theta_{*}}R_{Q}$.
   As we are working in barycentric coordinates \eqref{bary}, we only care about the
   action of $P^{(Q)}$ on the first $Q$ coordinates of any vector, 
   and so restrict our attention to the first $Q$
   columns of $\P{d_0}{d_{Q}} W_{\theta_{*}}R_{Q}$. 
   Moreover, $\P{d_0}{d_{Q}}$ acts on
   $W_{\theta_{*}}R_{Q}$ by cutting off all but the first $d_{Q}\geq Q$ rows. Altogether, 
   it is sufficient that the first $Q \times Q$ block of $W_{\theta_{*}} R_{Q}$ be
   invertible. Using the Laplace cofactor expansion of the determinant, it can be shown
   that there exists a permutation of the rows of $W_{\theta_{*}}R_{Q}$ that achieves
   this.
   
   Let $P$ be the elementary matrix that realizes this permutation, and note
   that $P\in O(d_0, \mathbb{R})$. Then
   letting $\lambda:= \lambda(\theta_{*}, \theta_{d_0})$ in the
   following for notational convenience, and expanding the definition of $W_{\theta_{*}}$
   from \eqref{Wtheta},
   \begin{align*}
       P W_{\theta_{*}}R_{Q} &= P \tilde{R} \diag(1, \lambda, \cdots, \lambda)
       \tilde{R}^{T} R_{Q} \nonumber \\ 
                             &= P\tilde{R} \diag(1, \lambda, \cdots, \lambda)
       \tilde{R}^{T} (P^{T}P) R_{Q} \\
                   &= (P\tilde{R}) \diag(1, \lambda, \cdots, \lambda) (P \tilde{R})^{T} (P
                   R_{Q}) \nonumber
   \end{align*}
   and so the permutation needed amounts to permuting the rows of $\tilde{R}$ and $R_{Q}$.
   Recall the definitions
   \begin{align}
       \label{Rconstr}
       \tilde{R}e^{d_0}_1 = \frac{u_{d_0}}{\abs{u_{d_0}}} = R_{Q} f_{Q}.
   \end{align}
   Clearly, $P u_{d_0} = u_{d_0}$, and so we can pick $\tilde{R}$ and $R_{Q}$ satisfying
   \eqref{Rconstr} and such that the block $(W_{\theta_{*}}R_{Q})_{Q\times Q}$ is
   invertible, and $P^{(Q)}$ restricted to $\spn \left\{
   \overline{x_{0,j}}\right\}_{j=1}^{Q}$ is injective.

   \vspace{1ex}
   Finally, note that the global minimum we have constructed is degenerate, since we have a
   family of cumulative bias terms $(b_{*}^{(\ell)}[\mu_{\ell}])_{\ell=1}^{Q}$ 
   parametrized by $(\mu_{\ell})_{\ell =1}^{Q} \subset (2 \delta, 3 \delta )^{Q}$.
\end{proof}

\vspace{1em}
\noindent
{\bf Acknowledgments:}
T.C. gratefully acknowledges support by the NSF through the grant DMS-2009800, and the RTG
Grant DMS-1840314 - {\em Analysis of PDE}. P.M.E. was supported by NSF grant DMS-2009800
through T.C., and UT Austin's GS Summer fellowship.

\appendix
\section{Geometry} \label{appendix}

As usual, we define the angle between two vectors $u, v, \in \mathbb{R}^{n}$ to be 
\begin{align*}
    \angle (u,v) := \arccos \frac{\langle u,v\rangle }{\norm{u}\norm{v}}.
\end{align*}
Then 
\begin{align*}
    \angle(u,v) \leq \theta
\end{align*}
is equivalent to 
\begin{align*}
    \frac{\langle u,v\rangle }{\norm{u}\norm{v}} \geq \cos \theta.
\end{align*}

A cross section of the largest cone centered around the diagonal in $\mathbb{R}^{n}$ is a
closed $(n-1)$-dimensional disk inscribed into the $n$-simplex. The angle $\theta_{n}$ of
such a cone can be obtained by analyzing a point in its boundary, such as the barycenter
of any one of
the faces. That is, if we choose the $j$-th face, we consider
\begin{align*}
    v_{j} = \sum_{i\neq j} \frac{1}{n-1} e_{i}
\end{align*}
In this case
\begin{align*}
    \norm{u_n} = \sqrt{n}, \quad \norm{v_{j}} = \frac{1}{\sqrt{n-1}}, \nonumber \\
    \langle u_n, v_{j}\rangle = (n-1)\frac{1}{n-1} = 1,
\end{align*}
so that
\begin{align*}
    \angle(u_n,v_{j}) = \arccos \frac{\sqrt{n-1}}{\sqrt{n}},
\end{align*}
giving the expression \eqref{thetan} for $\theta_{n}$.

In the proof of Proposition \ref{decreasingwidths}, we make the assumption $\theta_{*}
\geq \frac{\pi}{2}$. Note that a cone based at the origin with axis some unit vector
$f \in \mathbb{R}^{n}$
will contain the ball $B_{\delta}(2 \delta f)$ if it contains 
\begin{align*}
    \delta f + \delta f^{\perp}
\end{align*}
for any $f^{\perp}$ a unit vector perpendicular to $f$. The smallest angle for which this
holds is given by
\begin{align*}
    \cos(\theta) &= \frac{\langle \delta f, \delta f + \delta f^{\perp}\rangle }{\norm{\delta f}
        \norm{\delta f + \delta f^{\perp}}} \nonumber \\  
                 &= \frac{\norm{\delta f}}{\sqrt{\norm{\delta f}^{2} + \norm{\delta
                 f^{\perp}}^{2}}} \\ 
                 &= \frac{1}{\sqrt{2}},\nonumber 
\end{align*}
or $\theta = \frac{\pi}{2}$.


\end{document}